\theoremstyle{plain}
\newtheorem{theorem}{Theorem}[section]
\newtheorem{corollary}[theorem]{Corollary}
\theoremstyle{definition}
\theoremstyle{remark}
\icmltitlerunning{Scalable $k$-Means Clustering for Large $k$ via Seeded Approximate Nearest-Neighbor Search}
\newcommand{\defn}[1]{\textbf{#1}}
\begin{document}

\twocolumn[
\icmltitle{Scalable $k$-Means Clustering for Large $k$ via Seeded Approximate Nearest-Neighbor Search}

\icmlsetsymbol{equal}{*}

\begin{icmlauthorlist}
\icmlauthor{Jack Spalding-Jamieson}{indep}
\icmlauthor{Eliot Wong Robson}{equal,uiuc}
\icmlauthor{Da Wei Zheng}{equal,uiuc}
\end{icmlauthorlist}

\icmlaffiliation{uiuc}{Department of Computer Science; University of Illinois; 201 N. Goodwin Avenue; Urbana, IL, 61801,
USA}
\icmlaffiliation{indep}{Independent}

\icmlcorrespondingauthor{Jack Spalding-Jamieson}{jacksj@uwaterloo.ca}
\icmlcorrespondingauthor{Eliot Wong Robson}{erobson2@illinois.edu}
\icmlcorrespondingauthor{Da Wei Zheng}{dwzheng2@illinois.edu}

\icmlkeywords{approximate nearest-neighbor search, k-means, seeded search-graph, scalable, large $k$}

\vskip 0.3in
]

\printAffiliationsAndNotice{\icmlEqualContribution}

\begin{abstract}
For very large values of $k$,
we consider
methods for fast $k$-means clustering of
massive datasets with $10^7\sim10^9$ points in high-dimensions ($d\geq100$).
All current practical methods for this problem have runtimes at least $\Omega(k^2)$.
We find that initialization routines 
are not a bottleneck for this case.
Instead, it is critical to
improve the speed of
Lloyd's local-search algorithm,
particularly the step
that reassigns points to their closest center.
Attempting to improve this step naturally leads us to
leverage approximate nearest-neighbor search methods,
although this alone is not enough to be practical.
Instead, we propose a family of problems we call
\defn{Seeded Approximate Nearest-Neighbor Search},
for which we propose
\defn{Seeded Search-Graph} methods
as a solution.
\end{abstract}

\section{Introduction}
\label{sec:intro}

$k$-means clustering is a classical problem in unsupervised learning and computational geometry, with numerous applications in machine learning and data mining.
It has been thoroughly studied over the years, and it holds significant practical importance. See the recent survey of \citet{kmeans-survey-23} for a detailed discussion of this problem, its variants, and applications.

\newcommand{\Real}{\mathbb{R}}

The problem considers as input
a finite set of points $P \subset \Real^d$ in $d$-dimensional space, and a parameter $k$. The goal is to choose a set of $k$ centers $C \subset \Real^d$, $|C|=k$, minimizing the function
$\sum_{p \in P} \min_{c \in C} \norm{p - c}^2$.
In other words, we wish to find $k$ centers $C$ such that the sum of the squared distances from each input point $p \in P$ to its closest center is minimized.
Note that that the centers we choose may not be points of the original point set $P$. 
There are many alternative versions of this problem that have also been extensively studied~\cite{an2017recent}.

$k$-means clustering has long been known to be NP-hard, even for $k=2$~\cite{adhp-nphessc-09}, but a number of polynomial time approaches
are known to obtain good solutions, both in theory and in practice.
We will discuss several of them in \cref{subsec:kmeans-bg}.
The most significant is a local search algorithm
commonly
attributed to \citet{l-lsqpcm-82}
which is extremely popular in practice.
The basic version of the algorithm is as follows:

\begin{enumerate}
    \item \label{step:seed} \textbf{Initialize} A set of $k$ centers $C$ by uniform sampling from $P$.
    \item \label{step:reassign} \textbf{Assign} each point $P$ to its closest center.
    If no points change their assignment, the algorithm terminates.
    \item \label{step:avg} \textbf{Recompute} each center $C_i$ by taking the mean of points assigned to it. Return to step \ref{step:reassign}.
\end{enumerate}

We will henceforth refer to this method as \defn{Lloyd's algorithm}.
The second and third steps constitute a local search on the problem,
and are referred to as \defn{Lloyd iterations}.

Importantly, note that a single iteration of step~\ref{step:reassign} always computes
$\Theta(|P|\cdot k)$ pairwise distances,
regardless of the dataset.
For use-cases of $k$-means clustering
that require very large values of $k$,
the runtime of this algorithm has a rather impractical dependence on $k$.
To our knowledge, almost every other known competitive
approach to $k$-means clustering over large high-dimensional datasets,
both in practice and in theory,
requires at least $\Omega(k^2)$ time overall.
In particular, horizontal scaling and specialized hardware (such as GPUs)
have been the only successful approaches used to mitigate this dependence.

\subsection{Our Contribution}

In this work, we study methods to better-mitigate the dependence on $k$ in standard $k$-means that do not require specialized hardware or multiple machines.
We focus on the particularly challenging case of
large datasets ($|P| \ge 10^7$)
and moderate-to-high dimensionality ($d\geq100$).
As specific motivation, we will also
briefly discuss one possible application of this particular case
in \cref{para:out-of-core-anns}.
Additionally, in a first experiment,
we will show that the most promising path towards
designing a solution for this high-dimensional large-$k$ case
is to study improvements to Lloyd's algorithm.

For this challenging case, we will present one modified form of Lloyd's method that is quite practical, even when $k$ is almost on the same order of magnitude as $|P|$.
Moreover, our method requires no specialized hardware, requiring only
a reasonably fast CPU to out-perform GPU implementations of the best known methods at this scale.
At a high-level, our method will leverage techniques
devised for (in-memory) \defn{approximate nearest-neighbor search} (ANNS).

However, as we will see,
the most direct method for applying
ANNS techniques
does not result in practical algorithms.
Instead, we propose a more appropriate family of problems to study,
which we call \defn{seeded approximate nearest-neighbor search} (SANNS) where we have initial guesses (called seeds) for candidate nearest neighbors.
SANNS can be seen as a learning-augmented form of ANNS.
We present a framework of solutions to SANNS
that we call
\defn{seeded search-graphs}.
In particular, we present one particularly practical solution to SANNS
using this framework.
After tailoring our practical seeded search-graph approach to $k$-means,
we show that our solution is highly effective for scalable high-dimensional
$k$-means clustering with large $k$.
We call our full solution \defn{SHEESH}
(\underline{S}eeded searc\underline{H}-grap\underline{H}s for $k$-m\underline{E}ans clu\underline{S}t\underline{E}ring).
\footnote{Our code
is available at \href{https://github.com/jacketsj/mopbucket}{https://github.com/jacketsj/mopbucket}.
It includes instructions for dataset retrieval and experimental reproduction.
}

\subsection{Outline}

In \cref{sec:background} and \cref{sec:background-additional},
we discuss existing works
on $k$-means clustering,
approximate nearest-neighbor search,
and related works.
We will run several experiments in our paper,
so we present shared details of our experimental setup
in \cref{sec:exp-setup}.
In \cref{sec:init-blackbox},
we experiment with
initialization methods for large $k$ (\cref{subsec:init})
and a straightforward approach to accelerating
Lloyd's algorithm (\cref{subsec:black-box-exp}),
and conclude that these methods are not sufficient
to surpass simple forms of hardware-acceleration
in terms of practicality.
In \cref{sec:methodology},
we first present the seeded approximate nearest-neighbor search problem (SANNS),
as well as a semi-offline variant,
and present seeded search-graphs as a method
for solving SANNS.
In addition,
we discuss a highly practical seeded search-graph method
specialized for $k$-means clustering,
which consistently beats the hardware-accelerated
implementations of Lloyd's algorithm.
In \cref{sec:impl-deets},
we discuss some implementation details of this practical approach.
In \cref{sec:results-full},
we present the full set of results.
In \cref{sec:provable-sanns},
we discuss another seeded search-graph method
with some theoretical bounds.
Lastly, in \cref{sec:conclusion},
we discuss some avenues for future work.

\section{Background}
\label{sec:background}

In this section,
we discus some background on
approximate nearest-neighbor search.
We have deferred
discussion of
existing works on $k$-means clustering,
as well as existing works
applying
approximate nearest-neighbor search methods
to clustering,
to \cref{sec:background-additional}.

\subsection{Background on ANNS Methods}
\label{sec:ann_background}

In the \defn{approximate nearest-neighbor search} (ANNS) problem, the goal is to design a data structure that takes as input a set $P$ of points and a pairwise distance/similarity function on the points,
and efficiently outputs the $k'$ (approximate) nearest-neighbors to a query point $q$ in $P$ (we use $k'$ to differentiate from the $k$ in $k$-means).
For a set of $d$ dimensional points $P\subset \Real^d$, it is standard to use one of Euclidean, cosine, or inner-product functions as the distance/similarity function.
All of these are essentially equivalent for high-dimensional ANNS~\cite{bachrach2014speeding}.
This problem is also often called \defn{vector search} or \defn{vector similarity search}.
There are strong lower bounds for exact nearest-neighbor search data structures ~\cite{BorodinOR99}, as well as lower bounds in approximate settings~\cite{Liu04a}.

For our purposes, practical applications of approximate nearest-neighbor search
can be divided into two groups:
\begin{itemize}
    \item Those permitting \defn{in-memory} techniques (i.e., the entire dataset $P$ can be stored in RAM). We will refer to this as in-memory ANNS.
    \item Those requiring \defn{out-of-core} techniques (i.e., the dataset is too large to store in RAM, and is instead stored on disk or over a network). We will refer to this as out-of-core ANNS.
\end{itemize}
The distinguishing difference between these two groups is often the size of the data sets considered.
Techniques for in-memory ANNS are usually only applied to million-scale datasets,
while techniques for out-of-core ANNS are frequently applied to billion-scale datasets.
There has been significant divergence between in-memory and out-of-core techniques.

For simplicity, we can categorize some of the most relevant techniques as follows:
\begin{itemize}
    \item \defn{Quantization methods},
    such as product quantization~\cite{matsui2018pq}
    and
    vector quantization~\cite{liu2024vq},
    are (in a simplified sense)
    similar to
    dimension-reduction.
    Such methods are usually used in tandem with another technique,
    either as a method to reduce memory usage, to reduce runtime,
    or both.
    Although they are an important tool for ANNS in other cases,
    we will not need to discuss them in detail for the purposes of this work.
    \item \defn{Space-partitioning and clustering-based methods}
    constitute a very broad category of methods for ANNS.
    Theoretically-studied methods in this category
    include locality-sensitive hashing~\cite{lshsurvey}
    and RP-trees~\cite{DasguptaF08}.
    Several popular practical heuristic approaches
    include
    IVF~\cite{SivicZ03},
    IVFADC~\cite{jegou2010product,jegou2011searching},
    SPANN~\citet{chen2021spann},
    and
    ScaNN/SOAR~\cite{guo2020accelerating,sun2024soar}.
    In particular, all of these popular heuristic approaches apply some form of $k$-means clustering.
    In practice, such methods usually answer queries in two parts:
    First, a number of candidate clusters/partitions are identified
    (sometimes recursively).
    Next, they are searched, usually via another technique.
    \item \defn{Graph-search methods},
    such as HNSW~\cite{malkov2018efficient}, NSG~\cite{FuXWC19}, and NSSG~\cite{FuWC22}.
    \citet{WangXY021} give a survey of many such techniques.
    These methods answer queries using a \defn{beam search} on a (sparse) directed graph defined over the dataset (described formally in \cref{alg:beam}).
    In particular, many of the most popular methods for defining such a graph
    involve variations of a nearest-neighbor graph.
\end{itemize}

Almost all successful modern techniques for both in-memory and out-of-core ANNS
leverage some sort of quantization,
although they appear to be more critical for out-of-core ANNS (where they serve the purpose of memory-usage reduction, in addition to speed).
However, the other two categories are more clearly separated.
As a general rule, space-partitioning and clustering-based methods
are used for out-of-core ANNS,
while graph-search methods are used for in-memory ANNS.
One reason for this rule is that graph-search methods are extremely efficient in terms of number of operations,
but exhibit very poor locality,
which is important in out-of-core contexts.

\emph{We will only apply in-memory ANNS algorithms in this work}.
However,
out-of-core ANNS
is still important to discuss for another reason:
It serves as some direct motivation for accelerating $k$-means clustering,
since many out-of-core ANNS
algorithms rely on $k$-means clustering
for extremely large datasets.

\subsubsection{Out-of-core ANNS}
\label{subsubsec:out-of-core-anns}
\label{para:out-of-core-anns}
For out-of-core ANNS,
space partitioning or clustering-based methods are practically essential,
since they are the most effective tool for reducing memory usage.
Some of the most recent successful methods for out-of-core ANNS
also use a hybrid approach that additionally incorporates a graph
(see~\citet{jayaram2019diskann}, as well as some of the
2024
submissions to Big ANN Benchmarks~\cite{bigann2023}).

\paragraph{Out-of-Core ANNS Uses $k$-Means}
As stated before,
we will not be running any
out-of-core ANNS algorithms in this work,
but they do motivate
improvements to $k$-means clustering
with large $k$.
In particular,
variants of $k$-means clustering
are used in the vast majority of popular methods for space partitioning and
clustering-based approaches.
One reason for this is that $k$-means clustering over a dataset actually obtains \emph{two} things: A clustering of the dataset itself, \emph{and} a straightforward method
for assigning new points (i.e, query points) to clusters.
However, the most straightforward application of $k$-means would require many clusters,
and the query-time assignment routine would also require $k$ distance-comparisons per query, \emph{in addition} to performing a search within the chosen cluster(s).
One way
to mitigate this issue is to choose
a value of $k$ balancing the average cluster size and the total cluster count
(i.e., $k^2\approx{|P|}$).
Existing work
has either had to make this balancing tradeoff
~\cite{jegou2010product,jegou2011searching,bachrach2014speeding,baranchuk2018revisiting,JohnsonDJ21},
or apply workarounds
like hierarchical clustering (e.g., $k$-means trees)~\cite{guo2020accelerating,chen2021spann,sun2024soar}.
This presents us with a clear motivation for improving $k$-means clustering:
\begin{center}
\textit{
Mitigate the dependence on $k$ in methods for $k$-means clustering.}
\end{center}
If we could do so in a way that would also allow for efficient assignment of query points,
this would pave the way for out-of-core algorithms that
do not have to apply such workarounds or tradeoffs.
In particular, the methods we will present in \cref{subsec:black-box-exp} and \cref{sec:methodology} will do exactly this,
by using variants of \emph{in-memory ANNS}
on the cluster centers.
In this sense, one could start with an out-of-core ANNS instance (i.e., a massive dataset),
and leverage our techniques to reduce to a special variant of in-memory ANNS (i.e., a much smaller dataset).

\subsubsection{In-memory ANNS}
In contrast to out-of-core ANNS,
\emph{almost} all of the
competitive techniques for in-memory ANNS
primarily use graph-search methods.
To the best of our knowledge, the only notable exception to the dominance of graph-based techniques for in-memory ANNS
is ScaNN/SOAR~\cite{guo2020accelerating,sun2024soar},
which (at a high-level) uses a $k$-means tree and some clever quantization.
\citet{faiss-24} note that the reference implementation for both works
is thoroughly optimized
(and moreover, that the engineering optimizations are not discussed in the paper)
so it is possible this performance is more a result of careful engineering 
rather than characteristic to the algorithm.
The repository by \citet{aumuller2020ann} maintains an up-to-date benchmark of various in-memory ANNS implementations.
For an evaluation of graph-based algorithms for approximate nearest-neighbor search,
and discussions of parallelization techniques,
see ParlayANN~\cite{manohar2024parlayann}.

\subparagraph{HNSW}
One important in-memory method we will highlight now is
\defn{Hierarchical Navigable Small Worlds} (HNSW)~\cite{malkov2018efficient},
a graph-search method for in-memory ANNS
that has seen considerable industry adoption among vector search
databases and
libraries 
(e.g. \citet{Qdrant}, \citet{Milvus},
\citet{weaviate}, USearch~\cite{usearch}, and many more).
For a detailed discussion on vector similarity search databases, see the recent survey by \citet{pan2024survey}.

HNSW is an
\defn{incremental}
graph-search method,
meaning it allows for both queries
and insertions.
At a high-level,
most incremental graph-search methods (including HNSW)
maintain a sparse subgraph of
an approximate $k'$-nearest-neighbor
graph over the dataset $P$ for some value of $k'$.
Queries are performed with this structure
using beam search
over the graph.
Beam search (see \cref{alg:beam}) is sometimes called greedy search or best-first search
in this context.

\begin{algorithm}[tb]
   \caption{Beam Search}
   \label{alg:beam}
\begin{algorithmic}
   \STATE {\bfseries Input:} $P\subset\Real^d$, search-graph $G=(P,E)$, $p^*\in P$, $q\in\Real^d$, $p^*\in P$, $b\in\mathbb{Z}_{\geq1}$
   \STATE Initialize sets $C,N=\{p^*\}$ (candidates, nearest).
   \STATE Mark $p^*$ as visited.
   \REPEAT
   \STATE Extract the element $c$ from $C$ nearest to $q$.
   \IF{$|N|=b$ and $d(c,q)>d(n,q)$ for all $n\in N$}
     \STATE \textbf{break}
   \ENDIF
   \FOR{each (outgoing) neighbor $v$ of $c$ in $G$}
     \IF{$v$ is not marked as visited}
     \STATE Mark $v$ as visited
     \IF{$|N|<b$ or $d(v,q)<d(n,q)$ for some $n\in N$}
       \STATE Add $v$ to $C$ and $N$
       \STATE If $|N|>b$, remove the furthest element in $N$.
       \STATE If $|C|>b$, remove the furthest element in $C$.
     \ENDIF
     \ENDIF
     \STATE Mark $v$ as visited.
   \ENDFOR
   \UNTIL{$C$ is empty}
   \STATE {\bfseries Output:} $N$, the $b$ points in $P$ close to $q$
\end{algorithmic}
\end{algorithm}

For HNSW in particular,
these searches have an initial starting point
which is an approximate nearest-neighbor
from a random subsample of the data.
This subsample can be performed successively,
and a sparse graph can be maintained
at each level (that is, a search-graph is maintained for each level, whose initial search points are determined within the level above), allowing for the starting point of a search to be
determined recursively.
This is illustrated in \cref{fig:hnsw-demo}.

\begin{figure}[ht]
\centering
\includegraphics[width=0.15\textwidth,page=10]{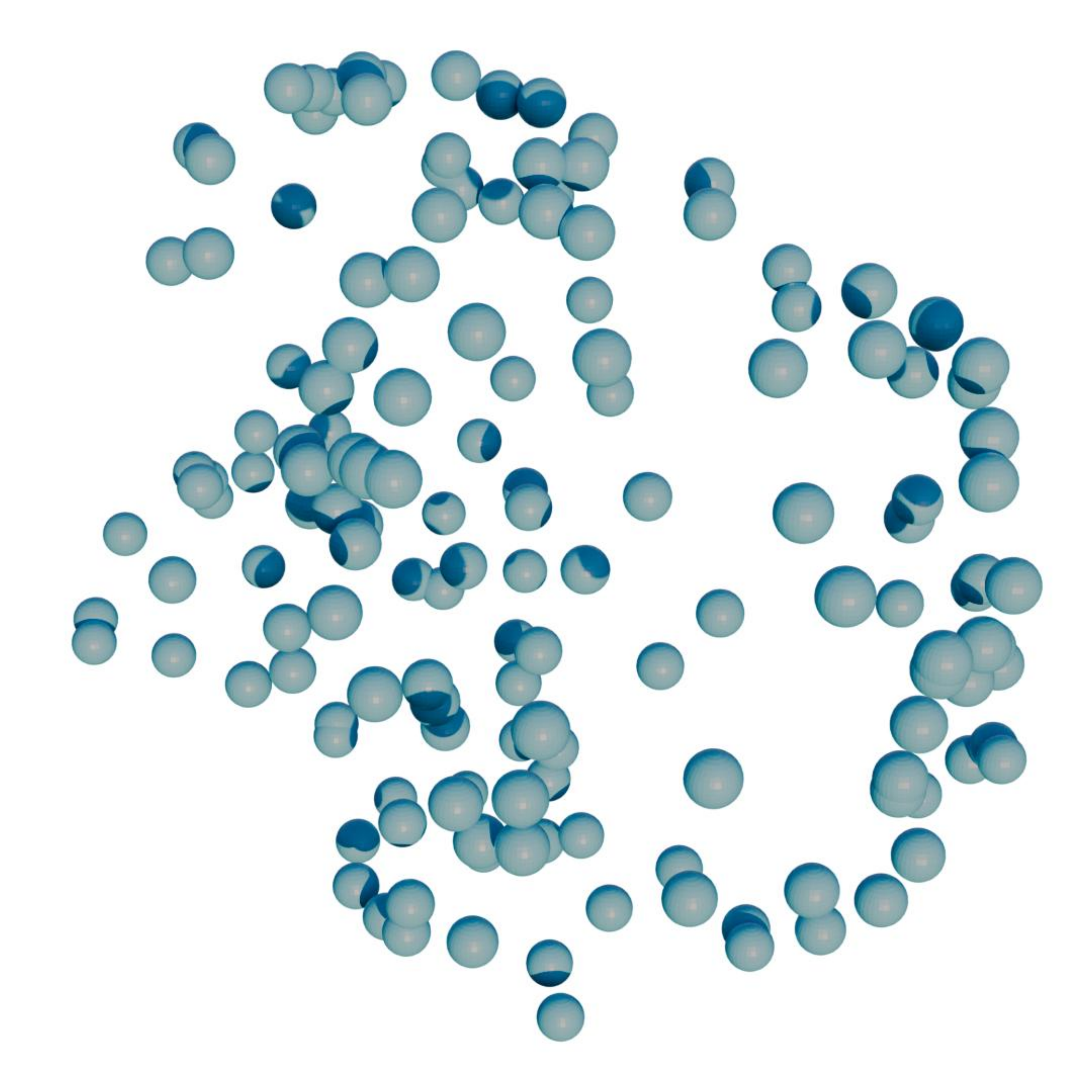}
\includegraphics[width=0.15\textwidth,page=12]{figures/hnsw_overview_v2}
\includegraphics[width=0.15\textwidth,page=14]{figures/hnsw_overview_v2}
\caption{An illustration of the search path formed by \cref{alg:beam} on each ``level'' of the HNSW data structure.
The large green sphere denotes the query point, and the search path is highlighted in beige.
}
\label{fig:hnsw-demo}
\end{figure}

The routine for insertions is actually quite similar:
To insert a point $p$ into an HNSW data structure containing a set of points $P$,
the first step is to find the (approximate) $k'$-nearest-neighbors $S$ of $p$ in $P$.
Then, some local updates are made to the graph to incorporate $p$,
in a special (relatively fast) routine.
For our purposes, we need not discuss the details of this routine,
and we point curious readers to the original paper~\cite{malkov2018efficient}.
It should be noted that \citet{manohar2024parlayann}
made the observation that a careful sequence of bulk insertions
are often more efficient when using parallelism.

HNSW has three key parameters, which are typically tuned based on the dataset and desired results:
\begin{itemize}
\item \texttt{ef\_build} is the value $k$ to use for queries at insertion-time (a build-time parameter).
\item \texttt{M} controls the sparsity of the final graph --- it is the maximum number of outgoing edges at each vertex (a build-time parameter).
\item \texttt{ef\_search} is the value $k$ to use for queries at query-time (a query-time parameter).
\end{itemize}
Larger values of \texttt{ef\_build} require longer build times, but usually offer better query time/accuracy tradeoffs.
\texttt{M} is intended to be representative of the ``intrinsic dimensionality'' of the dataset,
in a sense that is commonly applied to manifold learning techniques (see e.g., \cite{BelkinN01}).
Finally \texttt{ef\_search} allows for the tuning of the tradeoff between query time and accuracy.
\citet{malkov2018efficient} present several other parameters and suggest methods for choosing them based on these three.

\section{Initialization and Black-Box Reassignment}
\label{sec:init-blackbox}

In this section, we first run some experiments
for alternative initialization methods with a fairly large value of $k$.
We conclude that the typical initialization methods applied to small values of $k$
are not very effective for large values of $k$,
and that the important step to obtaining good solutions is Lloyd's algorithm.
Afterwards, we will present a naive formulation of Lloyd's algorithm
using a black-box ANNS data structure.
We will then test this formulation over a suite of ANNS data structures.

To aid in the reading of our plots, in each legend,
entries are sorted by their best score.
This is true of \emph{every plot in the paper}.

\subsection{Initialization Techniques for Large $k$}
\label{subsec:init}
Classic formulations of Lloyd's algorithm typically use uniform sampling to
choose the initial centroids
(see \cref{subsec:kmeans-bg} for further discussion).
A key observation of typical applications of $k$-means (i.e., when $k$ is small)
is that the initialization method can be quite important~\cite{av-kmeanpp-06,bmvkv-skmeanpp-12}.
However, our testing indicates that the same does not hold for larger values of $k$.
Even for the relatively small value of $k=10\,000$,
we observe that
all initialization methods
appear to converge to a near-identically-scored solution quite quickly
on all of our tested datasets:
See \cref{fig:example-inits-result},
where we plotted the score over Lloyd iterations after
various initialization methods.
We used
the cuML~\cite{cuml} implementations of \texttt{k-means++} and \texttt{k-means||},
as well as the SciKit-Learn~\cite{scikit-learn} \texttt{k-means++} implementation
in the case of one dataset due to VRAM constraints.
Note that there are some subtleties with some implementations
of these methods~\cite{GrunauORT23}.

\begin{figure*}
\centering
    \begin{subfigure}{0.32\textwidth}
        \centering
        \includegraphics[scale=0.52]{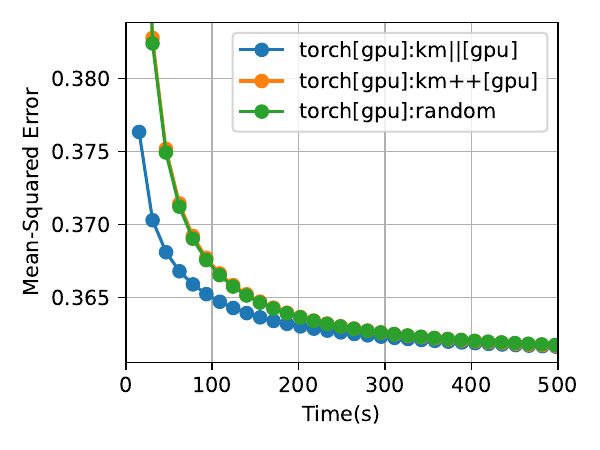}
        \caption{Text2Image10M}
    \end{subfigure}
    \begin{subfigure}{0.32\textwidth}
        \centering
        \includegraphics[scale=0.52]{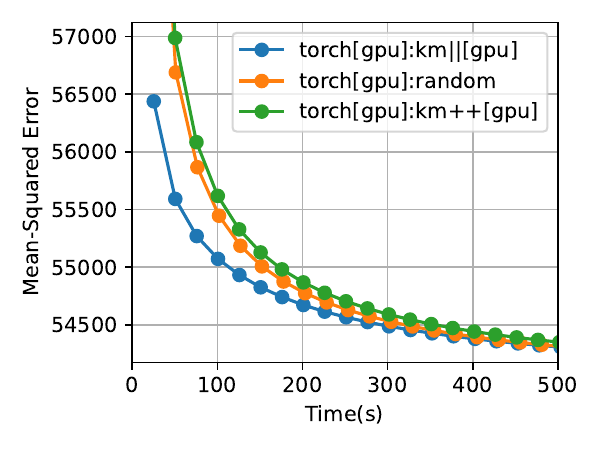}
        \caption{SIFT20M}
    \end{subfigure}
    \begin{subfigure}{0.32\textwidth}
        \centering
        \includegraphics[scale=0.52]{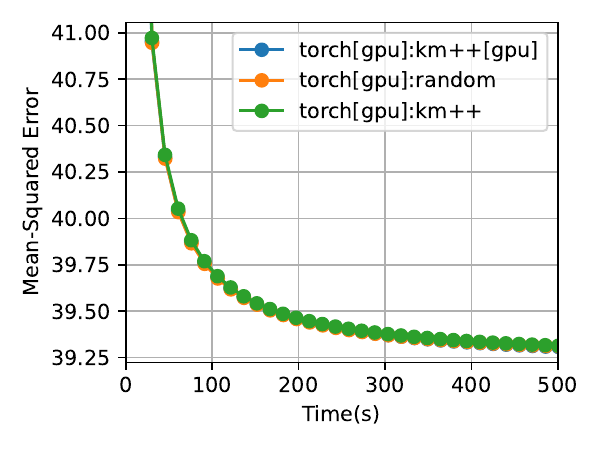}
        \caption{DPR5M}
    \end{subfigure}
    \caption{Comparisons of different initialization methods for $k$-means with $k=10\,000$.}
    \label{fig:example-inits-result}
\end{figure*}

\subsection{Black-Box ANNS for Reassignment}
\label{subsec:black-box-exp}

We suggest a natural modification of Lloyd iterations using
in-memory approximate nearest-neighbor search:
\begin{description}
    \item{\textbf{Build}}: Compute an ANNS data structure over the centers.
    \item{\textbf{Reassign}}: Use the data structure to compute the approximate nearest center for each point in the dataset, and assign the point to the corresponding cluster.
    \item{\textbf{Recompute}}: Recompute the centers as the centroids for the contents of each cluster.
\end{description}
The recompute step remains unchanged from Lloyd's algorithm.
In the low-dimensional setting,
an exact nearest-neighbor search data structure
(such as a k-d tree)
can be used in this framework to give an \emph{exact}
speedup to Lloyd's algorithm.
For the high-dimensional setting we consider,
such methods are inefficient,
and we instead apply ANNS data structures.

\begin{figure}[h]
    \centering
    \includegraphics[scale=0.52]{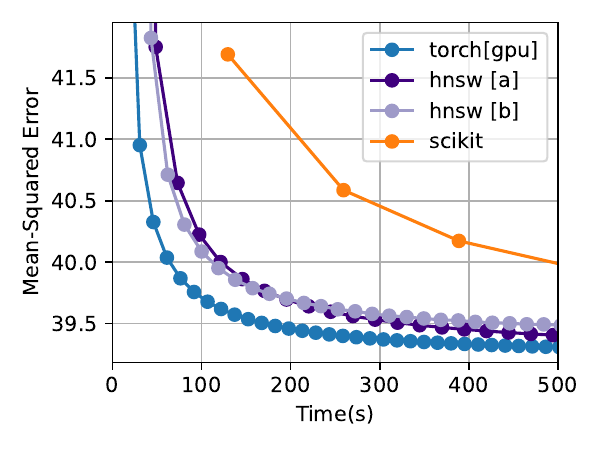}
    \caption{Comparison of HNSW as a black-box method for $k$-means clustering vs Lloyd's algorithm
    on the DPR5M dataset, with $k=10\,000$.
    Initialization is uniformly random.
    }
    \label{fig:blackbox-results-preview}
\end{figure}

We ran experiments on a large suite of
popular in-memory ANNS algorithms,
mostly by leveraging the implementations in the FAISS library~\cite{faiss-24}.
For baselines, we used three different implementations of Lloyd's algorithm:
The (CPU) SciKit-Learn implementation~\cite{scikit-learn},
a simple (GPU) implementation of our own using PyTorch~\cite{torch-19},
and
the (GPU) cuML implementation~\cite{cuml}.
We give a detailed overview
of our experimental setup in \cref{sec:exp-setup},
and a more detailed overview
of the suite in \cref{sec:results-full},
alongside more detailed plots.

Some of the algorithms we tested performed quite well
(e.g., hnsw [b] in \cref{fig:blackbox-results-preview}).
Additionally it turns out that one very simple change to the reassignment step can also
lead to marginally better results for all such methods:
When performing a reassignment of a point,
ensure that the new center is closer than the previously assigned center.
After performing this change, we did obtain some slightly improved results
(e.g., hnsw [a] vs hnsw [b]
in \cref{fig:blackbox-results-preview}).
but our previous observations remain true.

For a full list of algorithms we tested,
see \cref{sec:results-full}.
The main observations are as follows:
\begin{itemize}
    \item For sufficiently large values of $k$, most methods based on clustering
    and/or quantization generally exhibited
    comparable performance to SciKit-Learn's implementation~\cite{scikit-learn} of Lloyd's algorithm. Generally these methods performed iterations much more rapidly than Lloyd's algorithm (as expected),
    but improved in score at a much slower rate over time.
    \item In contrast, the search-graph methods we tested
    were generally quite effective.
    Moreover,
    they appeared to obtain even better comparative performance
    compared to Lloyd's algorithm as $k$ increased.
    In a handful of cases, with large values of $k$, HNSW in particular was marginally better than the \emph{GPU} implementations of Lloyd's algorithm (specifically, the cuML implementation~\cite{cuml}, and a basic implementation of ours leveraging PyTorch~\cite{torch-19}).
\end{itemize}

While these results are extremely promising,
this black-box approach is (in most cases)
not more practical
than simply running Lloyd's algorithm on a GPU,
even if it is comparable in some cases.
We aim to arrive at a more practical method,
so we will discuss our more carefully-designed methods in the next section.

\section{Bulk Seeded Approximate Nearest-Neighbor Search via Search-Graphs} \label{sec:methodology}
Although approximate nearest-neighbor search
seems to fit in nicely into the framework of Lloyd's algorithm,
naively using it does not result in good practical performance.
We believe that this is because it does not take advantage of all available information.

Instead, we argue that the correct problem to solve
is one we will call
\defn{Seeded Approximate Nearest-Neighbor Search} (SANNS),
as well as a semi-offline variation of the problem
we will call
\defn{Bulk Seeded Approximate Nearest-Neighbor Search} (BSANNS).
Similarly to ANNS,
in SANNS,
the goal is to design a data structure that
takes as input a set $P$ of points (say $P\subset\Real^d$ for simplicity),
and a pairwise distance function (say Euclidean distance for simplicity).
Then, the data structure must answer queries
consisting of a query point $q$
as well as identifiers
for some small set of points in $P$.
This small set of points in $P$ is called the set of \defn{seed points}.
This is intended to be a \emph{learning-augmented}
form
of ANNS:
It is \emph{not}
guaranteed that the seed points
are good approximate nearest-neighbors of $q$.
That is, algorithms for this problem should work
regardless of whether or not the seed points provide useful information (in the language of learning-augmented algorithms, they should be \defn{robust})
However,
such algorithms should provide better results
(in the tradeoff between time and result accuracy)
if the seed points happen to be decent approximate neighbors of $q$ (in the language of learning-augmented algorithms, they should be \defn{consistent}).
See \cite{learnaug} for an overview of learning-augmented algorithms.
In the batched version of this problem (BSANNS),
the only difference is that the queries are given in large batches
(say, of size $~|P|$).
The batched version of this problem is related to the so-called ``approximate all-$k'$-nearest-neighbor search'' problem, which is an offline version of ANNS
(see e.g.~\cite{MaL19} for an algorithm in the low-dimensional exact version of this problem).
In the context of $k$-means clustering,
the ``semi-offline'' variation with batches of our dataset is more useful than a fully offline version
because it allows us to minimize RAM usage.

Additionally, in \cref{sec:provable-sanns},
we observe that a previously-studied
search-graph algorithm with theoretical guarantees for ANNS
naturally extends to SANNS as well,
with guarantees
for robustness and consistency
(the expected types of guarantees for learning-augmented algorithms),
based on analysis by \citet{indyk2023worst}.
Unfortunately, this algorithm is impractical for our use-case of $k$-means clustering with large $k$,
since the build routine would take
$\Omega(k^3)$ time in this context.

\subsection{Seeded Search-Graphs}
\label{subsec:seeded-search-graphs}

Recall that search-graph methods
use the greedy/beam search routine
in \cref{alg:beam}.
In particular, they use a prescribed initial point,
whose choice depends on the particular search-graph method.
We can additionally modify this routine to use \emph{multiple}
initial points,
so long as the additional points are not too numerous.
This leads us to a candidate method for SANNS:
\emph{In \cref{alg:beam}, use the provided seed points as additional initial points in the greedy search routine.}
We note that, for HNSW in particular,
these additional initial points should only used during the search at the bottom-most layer,
since such points may not even exist at higher layers.

\paragraph{Bulk queries for additional seed points}
To approach the BSANNS problem while leveraging our techniques for SANNS,
we propose a heuristic:
Specifically, we introduce \emph{additional}
seed points for bulk queries.
By grouping together correlated queries from a bulk query,
we can then perform queries for each group in a careful fashion.
Specifically, while iterating through each group,
we obtain the top results for each query, and then use
them as additional seeds for the next query.
We also propose a simple method for choosing an iteration order for the datapoints within each group: Randomly project all datapoints into a $1$-dimensional space,
and sort them.

The idea here is simple: Correlated query groups can be considered a very ``rough'' clustering of the data (not necessarily a $k$-means clustering),
and so they are more likely to be assigned to the same final centroid.
Moreover, within each group, the ordering is expected to project each group of $n$ points into $\Real$ with
$O(\log n)$ distortion, as shown by \citet{johnson1984extensions}.
With HNSW, the correlation technique we used was simply to group everything by its default ``initial point'' given by the recursive structure of HNSW.
One could also consider grouping everything by its best seed point,
but we would only expect this to be effective in cases where the size of a bulk query is much larger than the number of datapoints (so that the groups are of nontrivial size).

\subsection{Seeded Search-Graphs for $k$-Means}
\label{subsec:k-means-ssg}

We now discuss how to apply seeded search-graphs
(and more broadly, algorithms for SANNS/BSANNS)
to Lloyd's algorithm,
and some further specialized improvements.
In particular, we will implement
our method using HNSW as a basis,
with several layered improvements.

\paragraph{Using Seed Points}
The first method we employed was to
to use the previous iteration's assignments as seed points for seeded search-graphs.
There is reason to believe this is a good heuristic:
The centers slow their movements over the course of many Lloyd iterations
(see \cref{tab:centroid-movement}),
so the best approximate assignment is increasingly likely to be the previous assignment
as the number of iterations increases.

\renewcommand{\arraystretch}{1.1}
\begin{table}[h]
    \centering
    \small
    \begin{tabular}{r|r||r|r||r|r}
         It.& Avg. dist. &
         It.& Avg. dist. &
         It.& Avg. dist. \\
            \hline
 1 & 136.012  &  14 & 4.070  &  27 & 1.889 \\ 
 2 &  31.827  &  15 & 3.780  &  28 & 1.842 \\ 
 3 &  19.410  &  16 & 3.529  &  29 & 1.766 \\ 
 4 &  14.190  &  17 & 3.302  &  30 & 1.682 \\ 
 5 &  11.243  &  18 & 3.096  &  31 & 1.591 \\ 
 6 &   9.357  &  19 & 2.899  &  32 & 1.519 \\ 
 7 &   8.044  &  20 & 2.766  &  33 & 1.456 \\ 
 8 &   7.082  &  21 & 2.587  &  34 & 1.375 \\ 
 9 &   6.282  &  22 & 2.432  &  35 & 1.295 \\ 
10 &   5.701  &  23 & 2.290  &  36 & 1.239 \\ 
11 &   5.160  &  24 & 2.182  &  37 & 1.237 \\ 
12 &   4.732  &  25 & 2.084  &  38 & 1.172 \\ 
13 &   4.408  &  26 & 1.969  \\
    \end{tabular}
    
    \caption{The average distance centroids move during each standard Lloyd iteration while clustering SIFT1M ($10^6$ points) with $k=5000$ clusters,
    demonstrating that the movement of the centroids slows over time.
    The average distance from a datapoint in SIFT1M to the respective closest of these centroids
    ranges from approximately $43\,481$ to $46\,524$ over $38$ iterations, showing that these movements are small.
    }
    \label{tab:centroid-movement}
\end{table}

\begin{figure*}[h]
    \centering
    \begin{subfigure}{0.3\textwidth}
    \centering
    \includegraphics[scale=0.50]{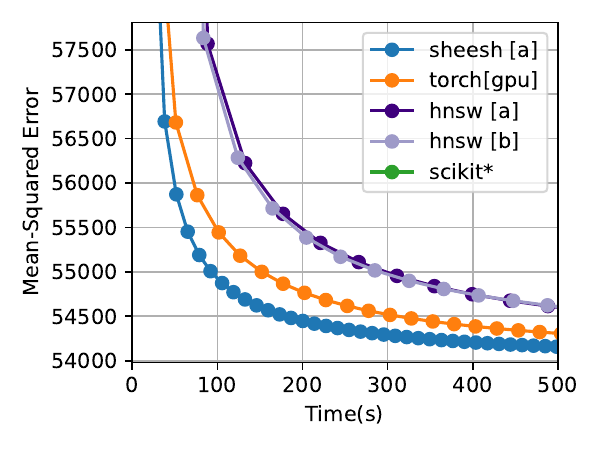}
    \caption{SIFT20M, $k=10\,000$}
    \end{subfigure}
    \begin{subfigure}{0.3\textwidth}
    \centering
    \includegraphics[scale=0.50]{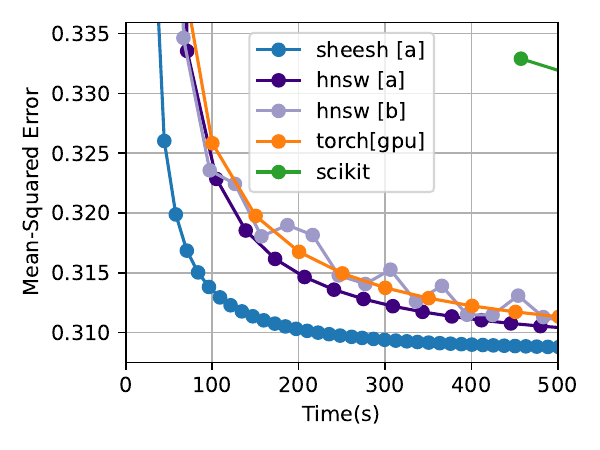}
    \caption{Text2Image10M, $k=50\,000$}
    \end{subfigure}
    \begin{subfigure}{0.3\textwidth}
    \centering
    \includegraphics[scale=0.50]{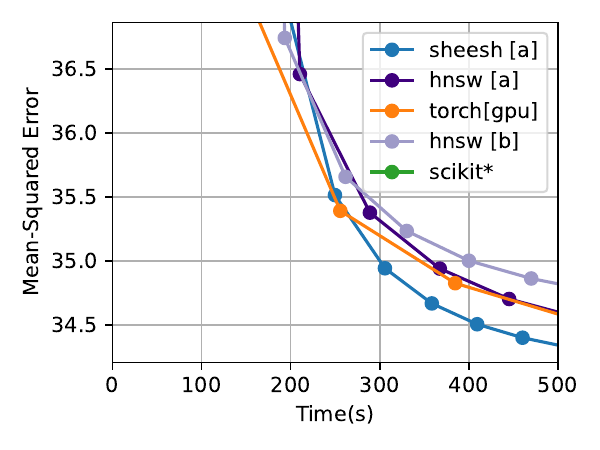}
    \caption{DPR5M, $k=100\,000$}
    \end{subfigure}
    \caption{Comparison of our approach with GPU acceleration, as well as the black-box HNSW approach on the SIFT20M, Text2Image10M, and DPR5M datasets respectively, for $k=10\,000$. Initialization is uniformly random.}
    \label{fig:results-preview}
\end{figure*}

It turns out this often already provides a small improvement
(this comparison is shown in \cref{sec:results-full}, alongside many others).
However, we can obtain even better performance as follows:
In each iteration, instead of recording just the best centroid assignment,
we can actually record several of the top assignments.
We use the top $10$, although this parameter has not been tuned.
Then, instead of using a single seed point during the next iteration,
we can them all as seed points,
by initializing the sets $C,N$
in \cref{alg:beam}
with these seed points
($N$ obtaining only the nearest $b$ of the seed points).
This leads to an even further improved algorithm
(this comparison is also shown in \cref{sec:results-full}).

\begin{algorithm}[h]
   \caption{\defn{SHEESH}: Accelerated Lloyd Iteration with BSANNS via Seeded Search-Graphs}
   \label{alg:lloyd-bsanns}
\begin{algorithmic}
   \STATE {\bfseries Input:} $P\subset\Real^d$, centers $C\subset\Real^d$, $|C|=k$,
   previous multi-assignments $S:P\to 2^P$,
   previous search-graph data structure $D$
   \STATE {\bfseries Build:} Build a new search-graph data structure $D'$ by using $D$.
   \STATE {\bfseries Reassign:}
   \FOR{each chunk $U$ of $O(k)$ points in $P$ (in parallel)}
   \STATE Group the points of $U$ into roughly-correlated groups.
   \STATE Randomly project each group into $\Real^1$, and sort the projected group.
   \FOR{each group $G$ of $U$}
   \FOR{each point $p$ of $G$, in the sorted order}
   \STATE Let $q$ be the previous point.
   \STATE Use $S'(q)\cup S(p)$ as seeds.
   \STATE With all these seeds, compute the seeded approximate $\sim10$ nearest centers of $p$
   using $D'$, and save the results as $S'(p)$.
   \ENDFOR
   \ENDFOR
   \ENDFOR
   \STATE {\bfseries Recompute:} Compute the new centers $C'$ as centroids.
   \STATE {\bfseries Output:} New centers $C'$, new multi-assignments $S'$,
   new search-graph data structure $D'$
\end{algorithmic}
\end{algorithm}

\paragraph{Continuous Rebuilds}
As noted by each of the works
using the so-called ``inverse assignment'' method
discussed in
\cref{subsec:related},
it is often inefficient
to build a data structure over the centers
from scratch at each iteration.
The inverse assignment method is one way of handling this
issue, although,
as noted, it does not scale to large datasets.
Instead, we suggest the following approach:
On all iterations except the first,
leverage the search-graph of the previous iteration
to construct the new search-graph,
rather than starting with an empty graph.
Since most centers do not significantly change between later iterations on average (see \cref{tab:centroid-movement}),
this graph serves as a good coarse approximation of the desired search-graph,
and the quality of the approximation improves over time
as the centers simultaneously converge.
For HNSW in particular, we note that
we fix the subsampling of the centers at higher levels of the data structure ---
allowing the same simultaneous convergence to occur on the higher levels of the search-graph.

\paragraph{Min Iteration Count}
Intuitively, search-graphs seeded with result from a previous Lloyd iteration
may get stuck more easily in the exact same local optimum
across iterations,
especially if the graph itself is less prone to change over time.
To combat this, we suggest a heuristic:
Rather than always terminating \cref{alg:beam} early
once a local optimum is reached,
we also require that a specified minimum number of iterations have been performed.
In particular, for HNSW, this bound is only applied at the lowest level of the hierarchy
(the search-graph over the full dataset), where the seeds are also applied.

The combination of these two additional improvements,
along with the bulk methodology discussed
\cref{subsec:seeded-search-graphs},
is an algorithm we call
\defn{SHEESH}
(\underline{S}eeded searc\underline{H}-grap\underline{H}s for $k$-m\underline{E}ans clu\underline{S}t\underline{E}ring).
It is labeled sheesh [a] in \cref{fig:results-preview}.
Recall that, in all of our plots' legends,
entries are in descending order by their best score.
In addition, any plot entries marked with a ``*'' timed out
by taking $>500$ seconds to finish even a single Lloyd iteration.
In particular, our algorithm with all features enabled (sheesh [a]),
achieved the best score in all experiments.
We have run numerous experiments in addition to the ones shown
in \cref{fig:results-preview},
which we discuss more thoroughly in \cref{sec:results-full}.
Our full algorithm for $k$-means leveraging seeded search-graphs is outlined in \cref{alg:lloyd-bsanns}.

\section{Conclusion}
\label{sec:conclusion}

We have presented a methodology for accelerating $k$-means with large values of $k$.
In particular, we leveraged methods previously applied to ANNS,
and improved them for SANNS and the specific application of $k$-means clustering,
culminating in \cref{alg:lloyd-bsanns}.
We have
demonstrated that our method is quite performant for $k$-means clustering
some large high-dimensional image and text embedding datasets.
We believe there are two main interesting directions of further research:
Evaluating our methodology on various applications of $k$-means clustering,
and
Studying methods to further accelerate our methods.
For the latter reason in particular,
we believe the results in \cref{subsec:black-box-exp}
are quite interesting, since they do not agree with the performance
of the same techniques for standard in-memory ANNS.
This may indicate that there are more interesting specialized methods that can
be applied to SANNS in the context of $k$-means.
One avenue for exploration could be to
try adapting similar techniques to what is used for
so-called ``out-of-distribution'' ANNS, such as \citet{ChenZHJW24}.

Another avenue for accelerating our methods
could be to explore approaches based on specialized hardware.
We focused on devising CPU-based algorithms,
and our final methods rely on search-graphs.
There is some work exploring GPU-based algorithms for
ANNS on search-graphs~\cite{ZhaoTL20,YuWZQZL22,GrohRWL23,OotomoNNWFW24}.
Currently, the fastest of these is CAGRA~\cite{OotomoNNWFW24}.
Unfortunately, CAGRA's preprocessing routine starts by applying
a (hardware-accelerated) quadratic-time algorithm.
Moreover, one of its key
hardware-leveraging methods
(step 0 of their described algorithm)
seems unlikely to be helpful for SANNS.
Consequently, we believe it is likely that CAGRA's
hardware-acceleration methods would not obtain the same speedup over CPU,
although this would need to be tested.
Intuitively, searching over a graph (even a regular graph)
is not a very efficient operation for a GPU,
so there may be other more-effective families of methods for hardware acceleration.

\section*{Acknowledgements}

Research of the third author supported in part by an NSERC PGSD. This material is based upon preliminary work supported by the Google Cloud Research Credits for PhD students.

\section*{Impact Statement}

This paper presents work whose goal is to advance the field of 
Machine Learning. There are many potential societal consequences 
of our work, none which we feel must be specifically highlighted here.

\bibliography{references}
\bibliographystyle{icml2025}

\newpage

\appendix
\onecolumn

\section{Further Background}
\label{sec:background-additional}

In this section, we discuss the additional background
information omitted from \cref{sec:background}.

\subsection{$k$-Means}
\label{subsec:kmeans-bg}

A typical implementation of $k$-means clustering in a larger
application (for example, SciKit-Learn \cite{scikit-learn})
would be the following:
\begin{enumerate}
    \item (Optionally) use a sub-sampling technique to reduce the dataset size.
    \item Initialize the centroids.
    \item Use a local search technique to improve the solution.
    \item Terminate after a pre-specified number of iterations or amount of time.
\end{enumerate}

Each of these steps has a variety of avenues for improvement, see
the survey by \citet{kmeans-survey-23}.
For each, we will summarize only some of the most important methods potentially relevant to our case of large $k$.
In particular, sub-sampling is quite limited
for large $k$, since the number of points per cluster may already be quite small.

\paragraph{Existing work on initialization}

The easiest and most efficient initialization method is to
uniformly sample points.
However,
strong initialization methods are often desirable
since they can obtain provable approximation ratios.
Some particularly popular and easily implementable algorithms include
\texttt{k-means++}~\cite{av-kmeanpp-06}, \texttt{scalable k-means++}/\texttt{k-means||}~\cite{bmvkv-skmeanpp-12}, and \texttt{multi-swap k-means++}~\cite{beretta2024multi}).
However,
even the theoretically fastest of these algorithms (\texttt{k-means||}) takes time at least $\Omega(k^2)$
(a tighter lower bound in terms of some slightly different parameters is given in \cite{bmvkv-skmeanpp-12}).
For large values of $k$, this is still far slower than uniformly random initializations.
Standard implementations support this observation:
SciKit-Learn's~\cite{scikit-learn} implementation of \texttt{k-means++}, the reference implementation of \texttt{k-means||}~\cite{bmvkv-skmeanpp-12}, and the cuML implementation of each~\cite{cuml},
all seem to be fairly slow on even moderately sized datasets.

\paragraph{Approximation guarantees}
There has been prior work on providing approximation guarantees for the $k$-means problem using techniques related to local search. One example is a
$(9+\epsilon)$-factor approximation algorithm by \citet{kanungo2002kmeans},
and a PTAS when the dimension $d$ is fixed by \citet{friggstad2019ptas}.
It is also known that there is no PTAS for arbitrary dimension \cite{awasthi2015hardness}.

\paragraph{Existing work on Local Search Methods}
We previously mentioned Lloyd's algorithm as a popular local search method
for $k$-means clustering.
There are also other local-search methods that have been well-studied.
However, Lloyd's algorithm
has remained standard
and has continued to show strong results in practice,
particularly for high-dimensional datasets.
One reason for this is that Lloyd's algorithm is highly parallelizable.
See the survey by \citet{kmeans-survey-23}
for an overview.
That said, all techniques we will discuss in this work could be generalized
to many variations of Lloyd's algorithm
clustering that subsample data points in each step,
but studying the effectiveness of such techniques is more difficult.

Although Lloyd's algorithm obtains good results in practice, it is known that there are two-dimensional datasets~\cite{v-kmemip-09} for which Lloyd's algorithm takes an exponential number of iterations to converge.
Moreover, without a careful initialization, it may
produce an arbitrarily bad clustering~\cite{av-kmeanpp-06}.
Practical implementations often use a time limit or an iteration limit (see implementations in the popular SciKit-Learn~\cite{scikit-learn} and FAISS~\cite{faiss-24} libraries),
instead of waiting for convergence.

A natural question is whether the execution speed of Lloyd iterations can be improved,
which is essentially equivalent to asking if step \ref{step:reassign} (assignment) can be accelerated.
This question has been studied in several contexts.
For low-dimensional data,
various methods are quite effective at accelerating this step exactly, including the use of k-d trees~\cite{kmnpsw-ekmca-02}, and methods based on the triangle inequality~\cite{Elkan03,Hamerly10,hd-alakmc-15}%
\footnote{For a comprehensive discussion of these techniques, their limitations in high dimensions, and other methods of acceleration (like the use of parallelism), see the book chapter by \citet{hd-alakmc-15}. }.
However, in many practical applications, we would like to run $k$-means in higher dimensions for applications such as natural language processing (e.g. word2vec recommends 100 to 300 dimensions~\cite{MikolovCCD13}, and the dense retriever model of \citet{karpukhin-etal-2020-dense} uses 768 dimensions) and neural network embeddings (e.g. the image embeddings of \citet{HuSASW20} are 154-dimensional).
Unfortunately, there are known lower bounds for exact techniques to accelerate this step in high-dimensional datasets~\cite{BorodinOR99}.
A few works have attempted to bypass exact nearest-neighbor search
by leveraging classical techniques for approximate nearest-neighbor search.
We discuss these works in
\cref{subsec:related}.

\subsection{Related Work}
\label{subsec:related}

Compared to the vast literature on $k$-means clustering as a whole,
we are only aware of a handful of works that have attempted to apply
any form of approximate nearest-neighbor search to any form of clustering.

\paragraph{Works using ANNS to Accelerate Lloyd's algorithm}
A few existing works have applied
methods for approximate nearest-neighbor search
in a black-box fashion
to accelerate Lloyd's algorithm (or variants)
in various contexts of $k$-means clustering.
Note that we will omit discussion of methods
that are essentially just dimension reduction techniques,
for which there are many works.

Several of these use a similar approach to
the general ``black-box'' methodology
we thoroughly test in
\cref{subsec:black-box-exp}.
\citet{PhilbinCISZ07} presented a method
greedily traversing randomized k-d trees as an ANNS heuristic for this purpose,
but they do not test their solution
w.r.t. $k$-means clustering score
(although some later work uses their solution as a baseline).
\citet{GongPYBBF15} applied techniques for
locality-sensitive hashing
(LSH)
--- which is a subclass of
space-partitioning methods (see the survey by \citet{lshsurvey}) --- 
to binary data under Hamming distance
with ``mini-batch'' $k$-means local search
(\citet{Sculley10} gives a discussion of mini-batches,
which can be seen as a modified form of Lloyd's algorithm
that maintains parallelizability).
\citet{HuWBZC17} present a similar method,
instead using Hamming LSH with a ``reranking'' step
to cluster Euclidean data via binary code quantization.
Note that ANNS over Hamming distance is generally easier than
Euclidean distance, since it a special case.
Moreover, locality-sensitive hashing
is now significantly outperformed
by modern ANNS techniques in practice~\cite{aumuller2020ann},
so this is likely not the most effective use of this black-box approach
(as we will see in \cref{subsec:black-box-exp}).
As part of an implementation
for a variant of ``Product Quantization'',
\citet{baranchuk2018revisiting} applied
HNSW to $k$-means clustering over
relatively small chunks of data.
Their method for doing so
is similar to our initial ``black-box'' methodology
presented in
\cref{subsec:black-box-exp},
although they focus on a much smaller-scale case,
and they did not provide any empirical justification
for their choice of HNSW over other ANNS methods,
nor did they empirically test their methods w.r.t. the $k$-means clustering objective
(nor was obtaining a good clustering score their goal).

A few works have also explored a different method of applying ANNS techniques
to the assignment stage of Lloyd iterations:
Instead of assigning dataset points in $P$ to their nearest center in $C$,
the centers can instead ``flood fill'' the dataset
using certain types of ANNS data structures
constructed over $P$ instead of $C$.
We will call this the \defn{inverse assignment} method.
\citet{kmnpsw-ekmca-02} applied the inverse assignment method
to compute \emph{exact} assignments.
\citet{AvrithisKAE15}
applied the inverse assignment method
by essentially projecting into a quantized two-dimensional
space (thereby doing a quantized dimension reduction).
\citet{WangWKZL15}
also employ a variation of the inverse assignment method
by constructing an approximate neighborhood graph,
which they then leverage to prune distance computations.
In particular, they construct their graph
using
random-projection trees~\cite{DasguptaF08},
an ANNS space-partitioning method.
Unfortunately,
the inverse assignment method is limited to
(small) data sets that can fit wholly in-memory,
since it requires a more careful traversal of
a specialized ANNS data structure built for $P$, rather than $C$.
This is in contrast to methods that build structures
over the
``forward'' assignment method
(including ours),
which only need to build and store a structure
for $k$ points in the metric space ---
even for excessively large values of $k$ (e.g. $k=|P|/100$),
this is still a very significant difference
in memory usage for massive datasets.
There is one method for which this limitation can
be overcome for the inverse method:
\citet{MatsuiOYA17} suggest
using product quantization
on the input vectors.
Since this is a quantization method,
rather than another form of ANNS method,
applying the inverse method
does not actually prune any distance computations,
but rather just speeds them up individually.
The experimental results of \citet{MatsuiOYA17}
suggest that, although faster,
their method cannot achieve the same score
as Lloyd's algorithm,
very quickly arriving at poor local optimums even for quite small values of $k$.
In particular,
since their method amounts to a brute force
with quantization methods,
this suggests that any similar approaches
applying an ANNS technique to Lloyd's algorithm
involving any sort of quantization is likely to result
in poor local optimums.
As we will see, this appears
to be true in our results as well.

Compared to all of these approaches,
we present a more complete analysis
of ANNS methods
for Lloyd's algorithm,
and we furthermore determine that they
are not effective without further work.
Moreover, we complete this further work,
eventually devising seeded search-graphs.

\paragraph{Other Forms of Clustering}
To the best of our knowledge, only one academic work has studied the application of approximate nearest-neighbor search to large-scale clustering that is not $k$-means clustering:
PECANN~\cite{yu2023pecann,yu2024parallel} studies the application of black-box approximate nearest-neighbor search methods to hierarchical density-based clustering.

Although not an academic work,
the software library USearch~\cite{usearch} can produce a hierarchical clustering using HNSW,
although the developer has not
publicized any experimental results or detailed documentation on their methodology,
and the feature is still marked as in-development.
Their method appears to involve treating each point on a non-zero level as a cluster ``center'',
and performing a simultaneous flood fill from all points in a non-zero level to the points of the dataset in the level below.
This would be similar to performing a random sample initialization of $k$-means, with no local search iterations,
and performing the ``assignment'' stage in a way so as to perform graph-based clustering
(i.e., approximating geodesic distance over a manifold approximated by the graph).

\newpage
\section{Experimental Setup}
\label{sec:exp-setup}
We ran multiple rounds of experiments, so in this section, we present the shared details of our experimental setup.

\paragraph{Environment}
We conducted the experiments on
a workstation machine with Ubuntu 22.04.5 LTS,
equipped with an AMD Ryzen 9 7950x CPU, 64GB of RAM, an Nvidia RTX 3090 GPU, and datasets stored in a 2TB SSD.
Note that this large amount of RAM is primarily for testing baseline code and existing libraries --- our own methods will not use a significant amount of RAM for any dataset, and (most importantly) will not require storing the entire dataset in memory at any time.
Note also that, at the time of writing, this GPU is significantly more expensive (by a factor of $~2\times$ in most marketplaces) than the CPU.

All timed CPU-based algorithms were allowed to use a maximum of $12$ threads
to reduce possible conflicts with operating system processes.
No limits were placed on the algorithms using the GPU.

\paragraph{Software Libraries}
We will make use of the following libraries for various baselines:
\begin{itemize}
    \item SciKit Learn~\cite{scikit-learn} on CPU
    \item cuML~\cite{cuml} on GPU
    \item PyTorch~\cite{torch-19} on GPU
    \item FAISS~\cite{faiss-24} on CPU
\end{itemize}
We use SciKit Learn, cuML, and PyTorch
for reference implementations of Lloyd's algorithm
and initialization routines.
In particular, SciKit Learn and cuML both offer
built-in implementations of Lloyd's algorithm and initialization methods,
while PyTorch enabled us to write a straightforward, short, and highly-efficient GPU-based
implementation of Lloyd's algorithm.
We use FAISS for its implementations of various baseline approximate nearest-neighbor search routines.
Note that none of these libraries are used for our own code, for which we discuss implementation details in \cref{sec:impl-deets}.

\paragraph{Datasets}
We tested on two image-embedding datasets,
and one text-embedding dataset:
\begin{itemize}
    \item Yandex's \textbf{Text2Image10M} dataset \cite{yandex-ann-dataset-21} which consists of images embeddings produced by the \texttt{Se-ResNext-101 model} \cite{HuSASW20}.
    This data set was used for benchmarking for the NeurIPS 2023 large-scale ANNS competition~\cite{bigann2023}.
    Typically, this is used as a cross-modal data set, with an ANNS query set derived from text embeddings, but in this paper we use Text2Image10M purely for its image embeddings, since clustering these points is what one would do for our suggested application (see \cref{para:out-of-core-anns}).
    \item
    The \textbf{SIFT1B} dataset~\cite{jtda-sobv-11}, and a
    20M slice of the SIFT1B
    dataset that we will call \textbf{SIFT20M}. These datasets are 128-dimensional image descriptors in SIFT format~\cite{Lowe04}. 
    This dataset is also frequently used for
    benchmarking large-scale out-of-core ANNS algorithms~\cite{baranchuk2018revisiting,JohnsonDJ21,jayaram2019diskann}.
    \item The \textbf{DPR10M} dataset generated by \citet{aguerrebere2023similarity} from 768-dimensional dense passage retriever model of \cite{karpukhin-etal-2020-dense}. To make the dataset of comparable size to the other ones, we created \textbf{DPR5M} by taking 
    a 5M slice of DPR10M.
    This is a higher dimensional text-based dataset, to contrast with the other datasets.
\end{itemize}

\renewcommand{\arraystretch}{1.1}
\begin{table}[]
    \centering
    \begin{tabular}{c|c|c|c|c}
         Dataset & Type & Dim & Points & Size \\
            \hline
         \textbf{SIFT1B} & Image & 128 & \hfill 1 bil. & \hfill 512.00 GB\\ 
         \textbf{SIFT20M} & Image & 128 & \hfill 20 mil. & \hfill 10.24 GB\\ 
         \textbf{Text2Image10M} & Image & 200 & \hfill 10 mil. & \hfill 8.00 GB \\
         \textbf{DPR5M} & Text & 768 & \hfill 5 mil. & \hfill 15.36 GB
         \\  \hline
    \end{tabular}
    
    \caption{Description of the datasets used in experiments.}
    \label{tab:data_sets}
\end{table}

Since these datasets are quite large,
we are unfortunately unable to distribute them ourselves.
However, we provide instructions for reproduction of these datasets
as part of our code.
For development purposes, we also used \defn{SIFT10K} and \defn{SIFT1M}.
For these smaller datasets, we leveraged the Pooch~\cite{pooch-20} library for retrieval and caching.

We note that our comparison plots use
the $10$-million and $5$-million sized datasets only.
This is primarily due to limitations in our baselines,
rather than any limitations in SHEESH.

\newpage
\section{Full Results}
\label{sec:results-full}

In this section, we present the full set of results for our experiments.

For black-box acceleration, we tested the following families of popular methods:
\begin{itemize}
    \item Baseline: Lloyd's algorithm ($3$ implementations, $2$ of which were on GPU)
    \item Quantization-only techniques: Scalar Quantization~\cite{liu2024vq}, Product Quantization~\cite{matsui2018pq}
    \item Clustering-only techniques: IVF~\cite{SivicZ03}, ScaNN~\cite{guo2020accelerating} (with quantization disabled, amounting to a $k$-means tree)
    \item Combined clustering+quantization techniques: ScaNN~\cite{guo2020accelerating}, IVFPQ~\cite{jegou2010product}, IVFPQR~\cite{jegou2011searching}
    \item Search-graph techniques: NN-Descent~\cite{DongCL11}, HNSW~\cite{malkov2018efficient}, NSG\footnote{%
While we attempted to thoroughly test NSG,
which showed promising preliminary results similar to HNSW,
we were not able to evaluate it in most of our experiments,
for the following reason:
At the time of writing, there appears to be an occasional bug
in the FAISS implementation of NSG
that can result in an infinite loop,
so we disabled it while performing most of our experiments.
It still appears in one of our plots, where it shows good performance.
}~\cite{FuXWC19}
\end{itemize}
Whenever possible, we leveraged implementations given in
FAISS~\cite{faiss-24}.
For ScaNN,
we used the reference implementation,
which itself is called ScaNN~\cite{guo2020accelerating, sun2024soar}.
For a baseline CPU implementation,
we used SciKit-Learn~\cite{scikit-learn}'s CPU implementation of Lloyd's algorithm.
We also compared against two GPU implementations of Lloyd's algorithm:
The GPU implementation in cuML~\cite{cuml},
as well as a simple implementation of our own using PyTorch~\cite{torch-19}.
We note that, in some sense, comparing
the CPU-based approaches
to GPU implementations is unfair,
especially since our GPU is significantly more expensive than our CPU
(as noted in \cref{sec:exp-setup}).
However, practical implementations of $k$-means clustering
very frequently use GPU acceleration (e.g.,~\cite{faiss-24}),
so this is an important comparison to make
when attempting to devise a method to be used in practice.
Moreover, since our experiments suggest that
our final methods are far superior to the GPU accelerated implementations,
they serve as a good reference point.

We had a number of parameters that varied for each black-box technique,
as well as some that varied for our own techniques discussed in \cref{sec:methodology}.
Most notably,
we have the
``avoid\_regress'' parameter
(the simple improvement discussed in
\cref{subsec:black-box-exp}).
For the quantization-only techniques,
we opted not to re-run
with avoid\_regress=true,
since they were excessively slow.
For the ScaNN library,
we tried many combinations of parameters,
including turning on/off quantization.
For our own techniques,
we also had several parameters.
In particular, we toggled
several of the different strategies discussed in \cref{sec:methodology}
to study their effectiveness.
We give a legend of all parameter variations
in \cref{tab:param-var},
which can be used as reference
for all of our experimental plots.

For the algorithm parameters we did not vary,
we generally applied sane/recommended defaults.
In particular, for our methods, as well as the black-box tests with HNSW, we used the following parameters:
\begin{itemize}
\item $\text{\texttt{ef\_build}}=200$
\item $\text{\texttt{M}}=60$
\item $\text{\texttt{ef\_search}}=10\times\text{\texttt{num\_prev\_assignments}}$
\item $\text{\texttt{min\_iterations}}=2\times\text{\texttt{ef\_search}}+1=21$
\end{itemize}
There is almost certainly some improvement to be gained by better tuning these parameters
to each dataset, but we have not done so.

\begin{table}
\centering
\begin{tabular}{ll}
\textbf{Tagged Label} & \textbf{Parameters} \\ \hline
ivf-PQ[a] & avoid\_regress: true \\
ivf-PQ[b] & avoid\_regress: false \\
hnsw[a] & avoid\_regress: true \\
hnsw[b] & avoid\_regress: false \\
ivf-PQr[a] & avoid\_regress: true \\
ivf-PQr[b] & avoid\_regress: false \\
ivf-flat[a] & avoid\_regress: true \\
ivf-flat[b] & avoid\_regress: false \\
scann[a] & \footnotesize{num\_leaves: 200, num\_leaves\_to\_search: 10, use\_score\_ah: true, reorder\_size: 100, avoid\_regress: false} \\
scann[b] & \footnotesize{num\_leaves: 200, num\_leaves\_to\_search: 10, use\_score\_ah: false, reorder\_size: null, avoid\_regress: false} \\
scann[c] & \footnotesize{num\_leaves: 500, num\_leaves\_to\_search: 10, use\_score\_ah: true, reorder\_size: 100, avoid\_regress: false} \\
scann[d] & \footnotesize{num\_leaves: 500, num\_leaves\_to\_search: 10, use\_score\_ah: false, reorder\_size: null, avoid\_regress: false} \\
scann[e] & \footnotesize{num\_leaves: 200, num\_leaves\_to\_search: 10, use\_score\_ah: true, reorder\_size: 100, avoid\_regress: true} \\
scann[f] & \footnotesize{num\_leaves: 200, num\_leaves\_to\_search: 10, use\_score\_ah: false, reorder\_size: null, avoid\_regress: true} \\
scann[g] & \footnotesize{num\_leaves: 500, num\_leaves\_to\_search: 10, use\_score\_ah: true, reorder\_size: 100, avoid\_regress: true} \\
scann[h] & \footnotesize{num\_leaves: 500, num\_leaves\_to\_search: 10, use\_score\_ah: false, reorder\_size: null, avoid\_regress: true} \\
nndescent[a] & avoid\_regress: false \\
nndescent[b] & avoid\_regress: true \\
sheesh[a] & \footnotesize{use\_rebuilds: true, num\_prev\_assignments: 10, enable\_seeds: true, enable\_bulk: true, enable\_min\_iter: true} \\
sheesh[b] & \footnotesize{use\_rebuilds: false, num\_prev\_assignments: 10, enable\_seeds: true, enable\_bulk: true, enable\_min\_iter: true} \\
sheesh[c] & \footnotesize{use\_rebuilds: false, num\_prev\_assignments: 10, enable\_seeds: true, enable\_bulk: false, enable\_min\_iter: true} \\
sheesh[d] & \footnotesize{use\_rebuilds: false, num\_prev\_assignments: 10, enable\_seeds: true, enable\_bulk: false, enable\_min\_iter: false} \\
sheesh[e] & \footnotesize{use\_rebuilds: false, num\_prev\_assignments: 1, enable\_seeds: true, enable\_bulk: false, enable\_min\_iter: false} \\
sheesh[f] & \footnotesize{use\_rebuilds: false, num\_prev\_assignments: 1, enable\_seeds: false, enable\_bulk: false, enable\_min\_iter: false} \\
\end{tabular}
\caption{Legend for algorithms with multiple parameter variations.}
\label{tab:param-var}
\end{table}

We tested each algorithm on each dataset
(listed in \cref{sec:exp-setup})
with each variation of parameters,
using a $500$ second timeout
--- we record the score up to and including the first iteration
that exceeds the $500$ second threshold,
at which point we halt the algorithm.
Note that many such scores vastly exceed the $500$ second threshold
for cases in which the algorithm takes a long time to compute a single iteration.
Note also that the cuML implementation of Lloyd's algorithm
ran out of VRAM for several cases (from which it is omitted),
but generally exhibited similar performance to our PyTorch implementation in those where it did not.
We have plotted all of our data
in \cref{fig:final-results-1,fig:final-results-2,fig:final-results-3}.
We also performed
one additional limited test, to demonstrate the scalability of our algorithm,
plotted in \cref{fig:final-results-1b}.
To aid in the reading of our plots,
we have sorted all entries in each legend by their
best score.
This is true of \emph{every plot in the paper}.
In particular, with this information, one can see that
our algorithm (with all features enabled, sheesh [a] from the table)
achieved the best score in every single experiment we performed.
In addition, any plot entries marked with a ``*'' timed out
(took $>500$ seconds to finish their first iteration).

\begin{figure*}
    \centering
    \includegraphics[scale=0.50]{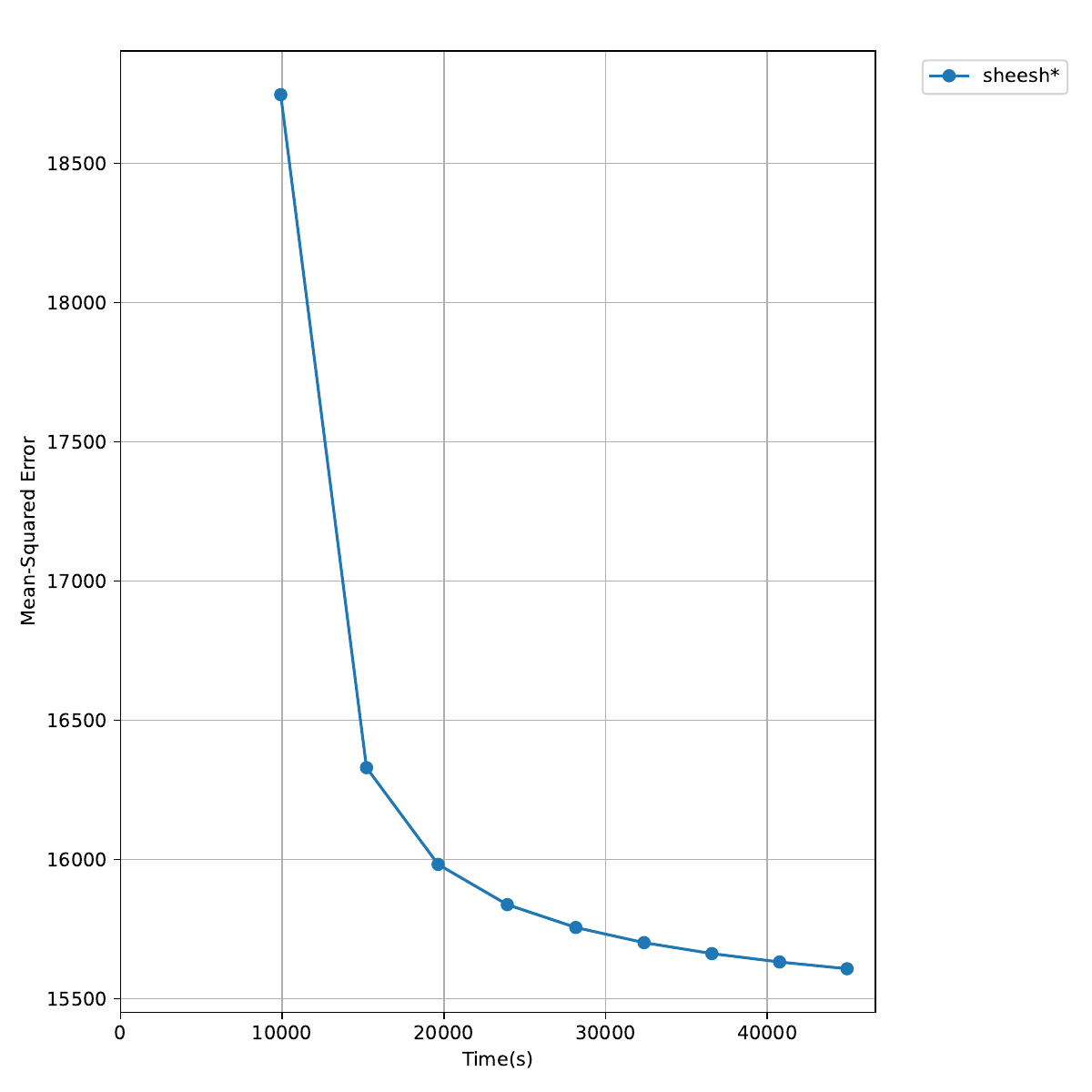}
    \caption{
    A plot of SHEESH running on SIFT1B with $k=1\,000\,000$
    for just over $12$ hours.
    Initialization is uniformly random.
    We estimate SciKit-Learn would take roughly 9.5 days to run a single iteration in this case.
    }
    \label{fig:final-results-1b}
\end{figure*}

\begin{figure*}
    \centering
    \begin{subfigure}{0.49\textwidth}
        \centering
        \includegraphics[scale=0.40]{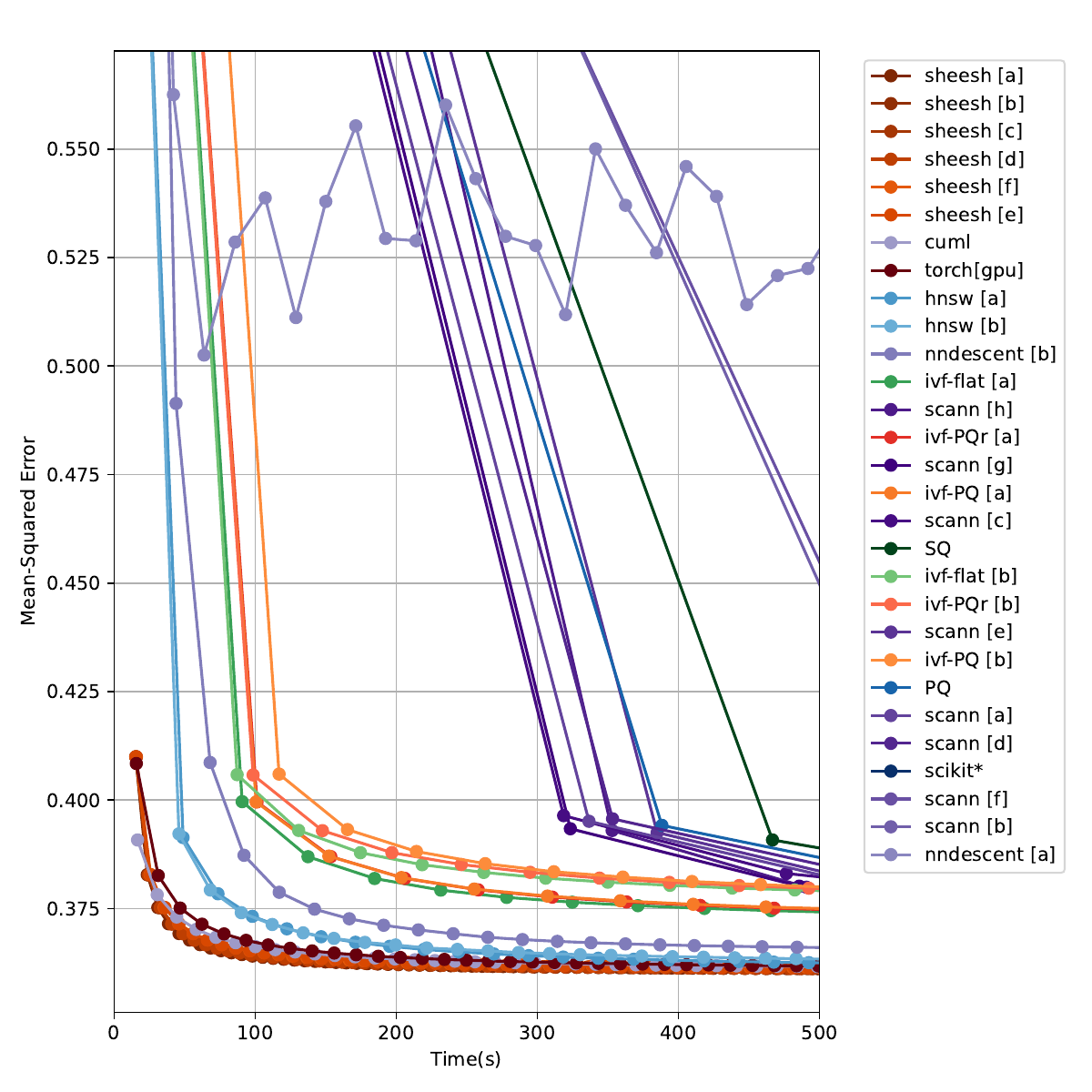}
        \caption{$k=10\,000$}
    \end{subfigure}
    \begin{subfigure}{0.49\textwidth}
        \centering
        \includegraphics[scale=0.40]{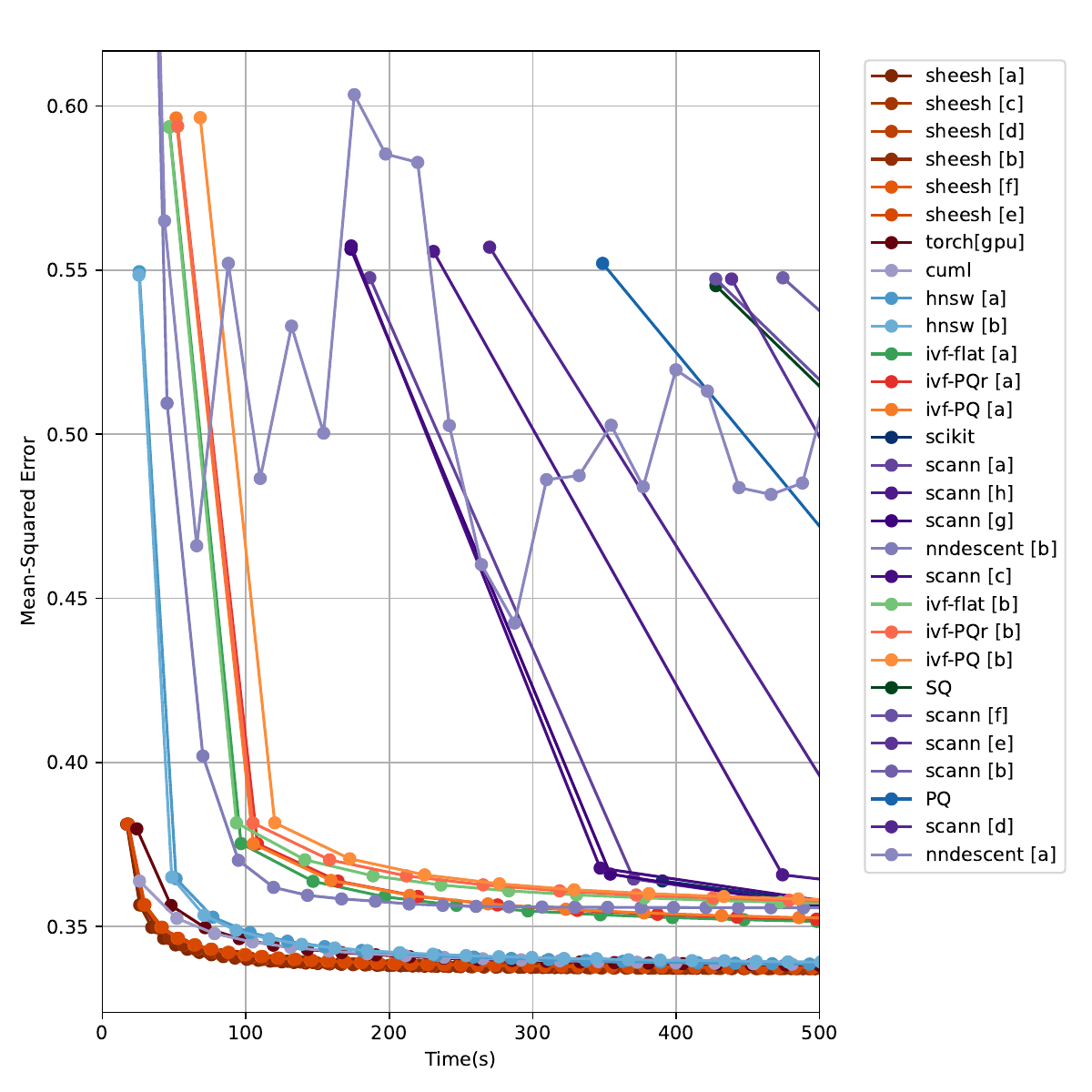}
        \caption{$k=20\,000$}
    \end{subfigure}
    \begin{subfigure}{0.49\textwidth}
        \centering
        \includegraphics[scale=0.40]{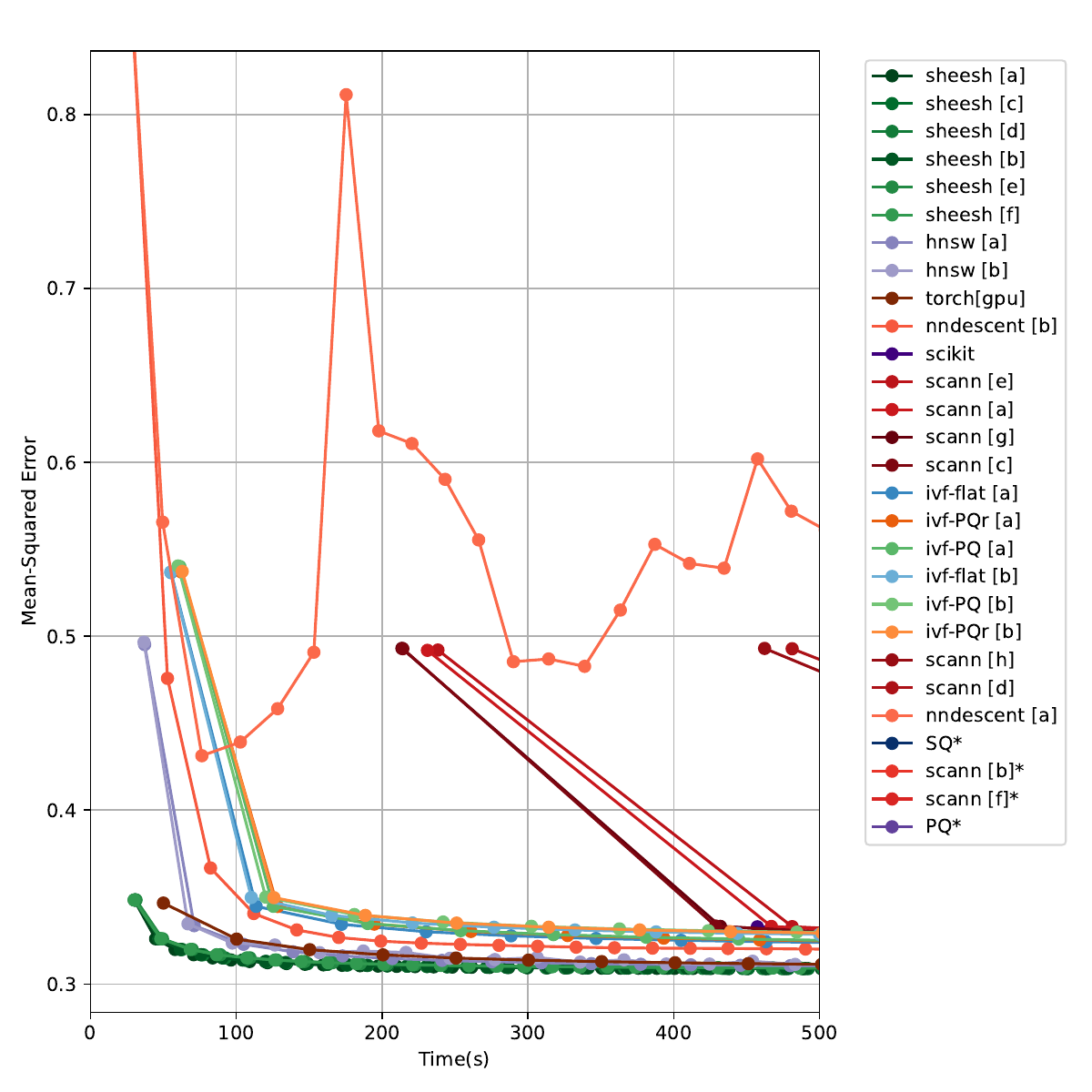}
        \caption{$k=50\,000$}
    \end{subfigure}
    \begin{subfigure}{0.49\textwidth}
        \centering
        \includegraphics[scale=0.40]{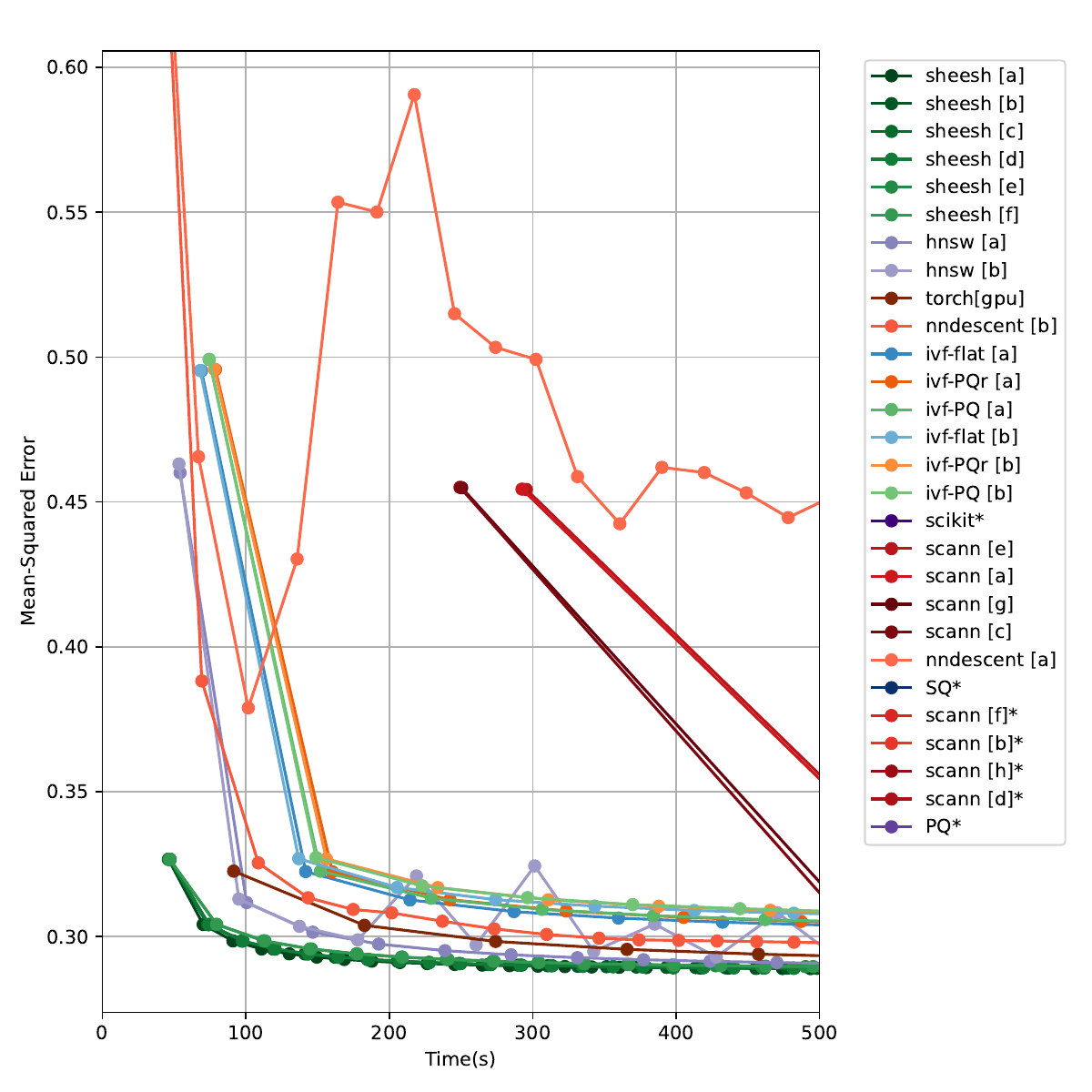}
        \caption{$k=100\,000$}
    \end{subfigure}
    \caption{Comparisons of all methods on the Text2Image10M dataset,
    for all tested values of $k$. Initialization is uniformly random.}
    \label{fig:final-results-1}
\end{figure*}

\begin{figure*}
    \centering
    \begin{subfigure}{0.49\textwidth}
        \centering
        \includegraphics[scale=0.40]{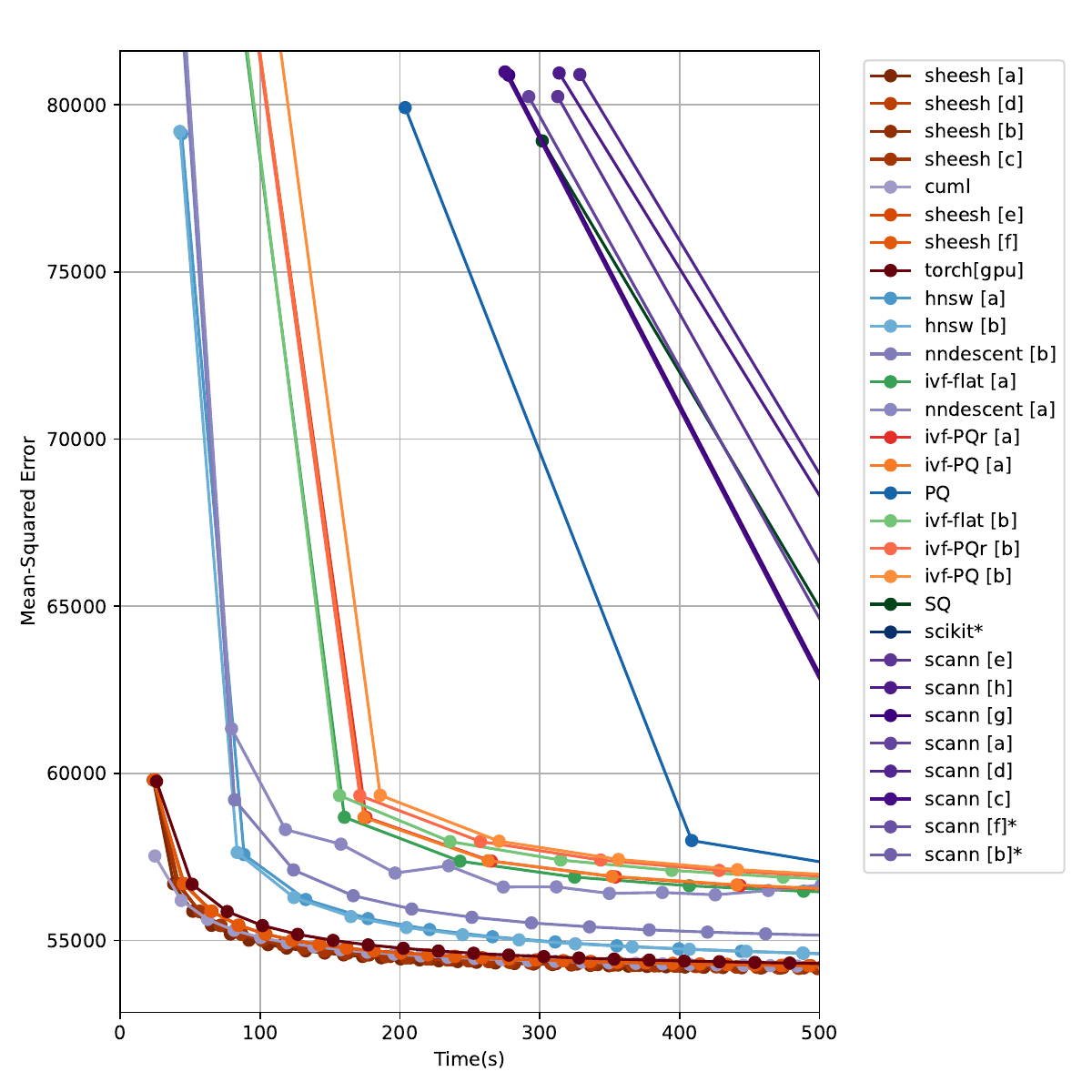}
        \caption{$k=10\,000$}
    \end{subfigure}
    \begin{subfigure}{0.49\textwidth}
        \centering
        \includegraphics[scale=0.40]{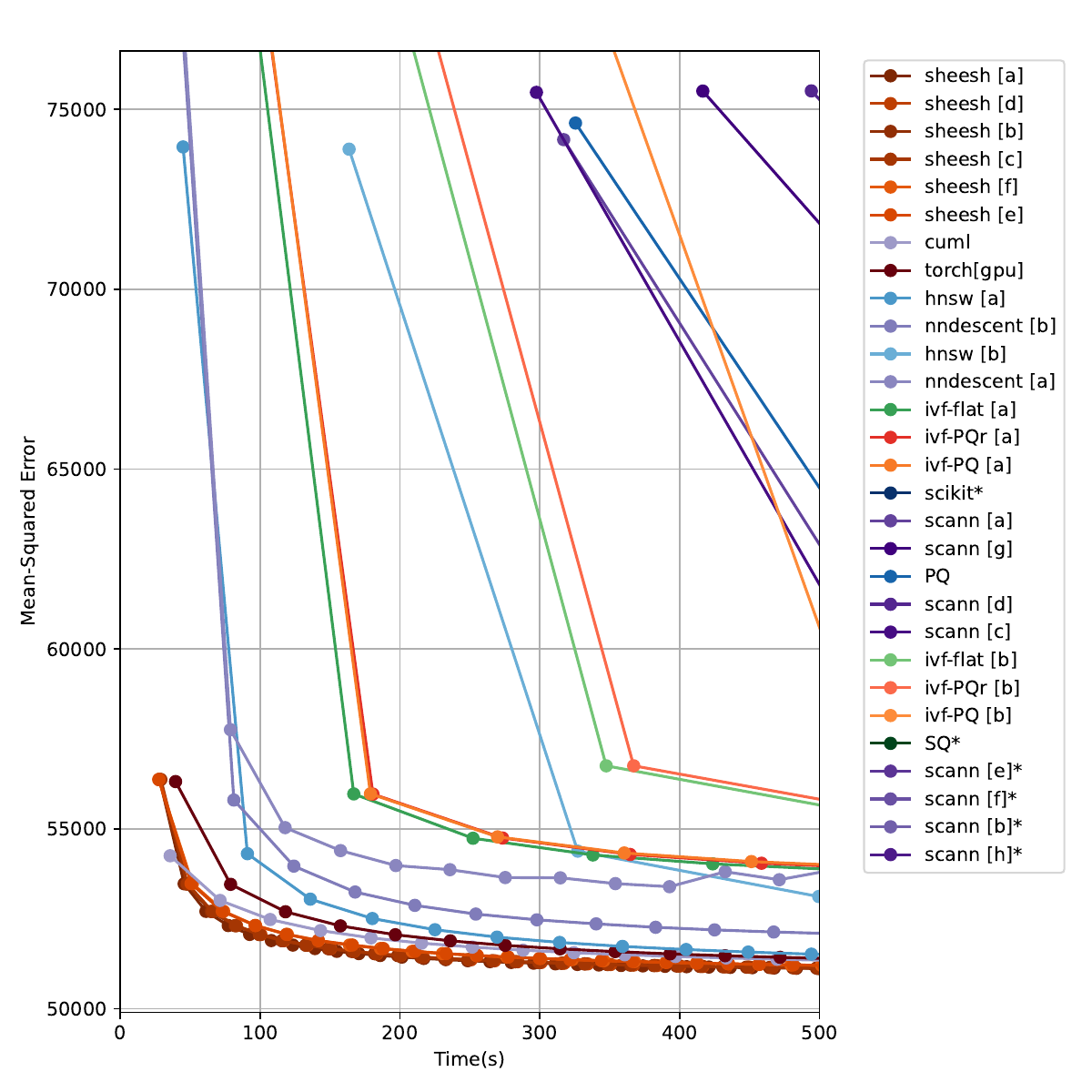}
        \caption{$k=20\,000$}
    \end{subfigure}
    \begin{subfigure}{0.49\textwidth}
        \centering
        \includegraphics[scale=0.40]{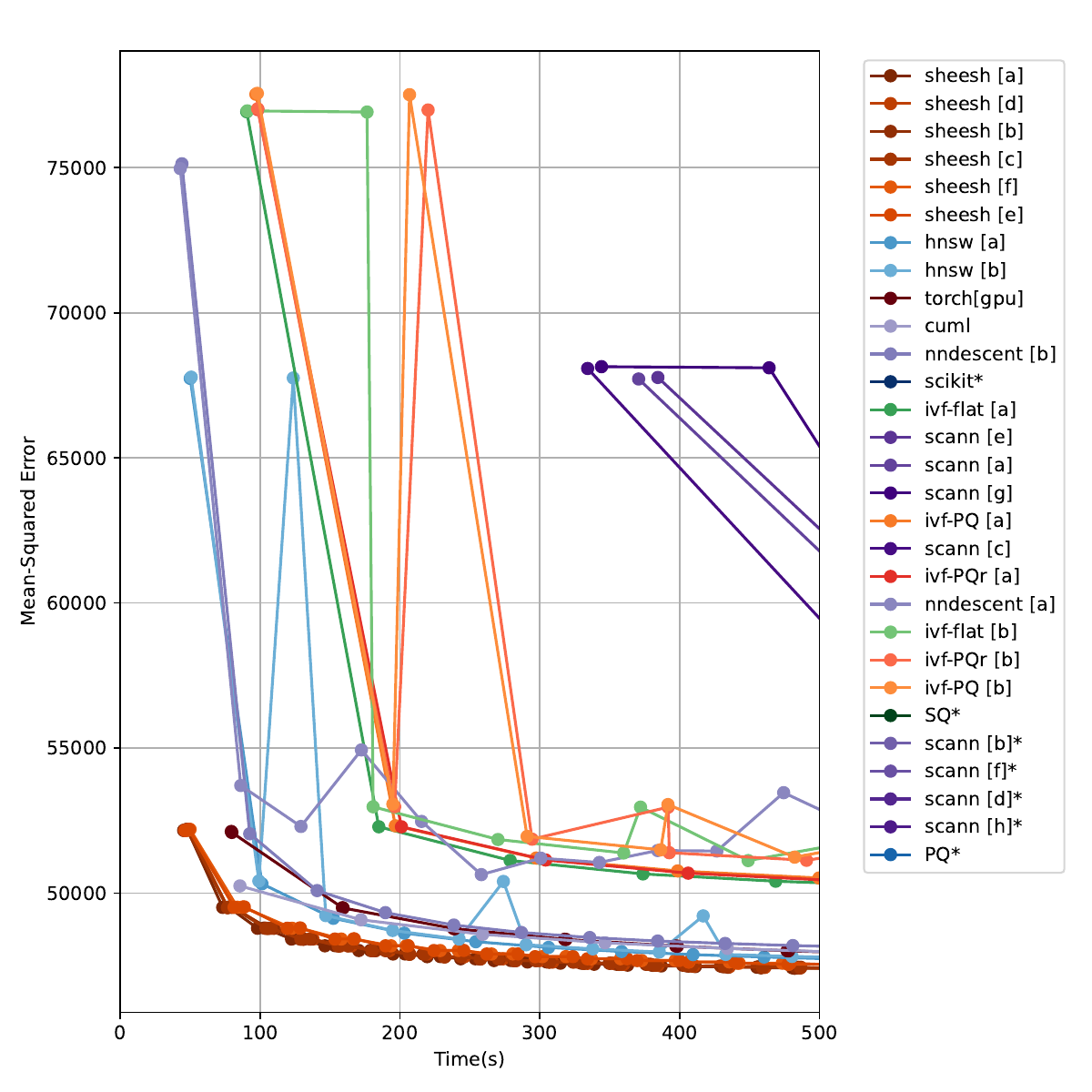}
        \caption{$k=50\,000$}
    \end{subfigure}
    \begin{subfigure}{0.49\textwidth}
        \centering
        \includegraphics[scale=0.40]{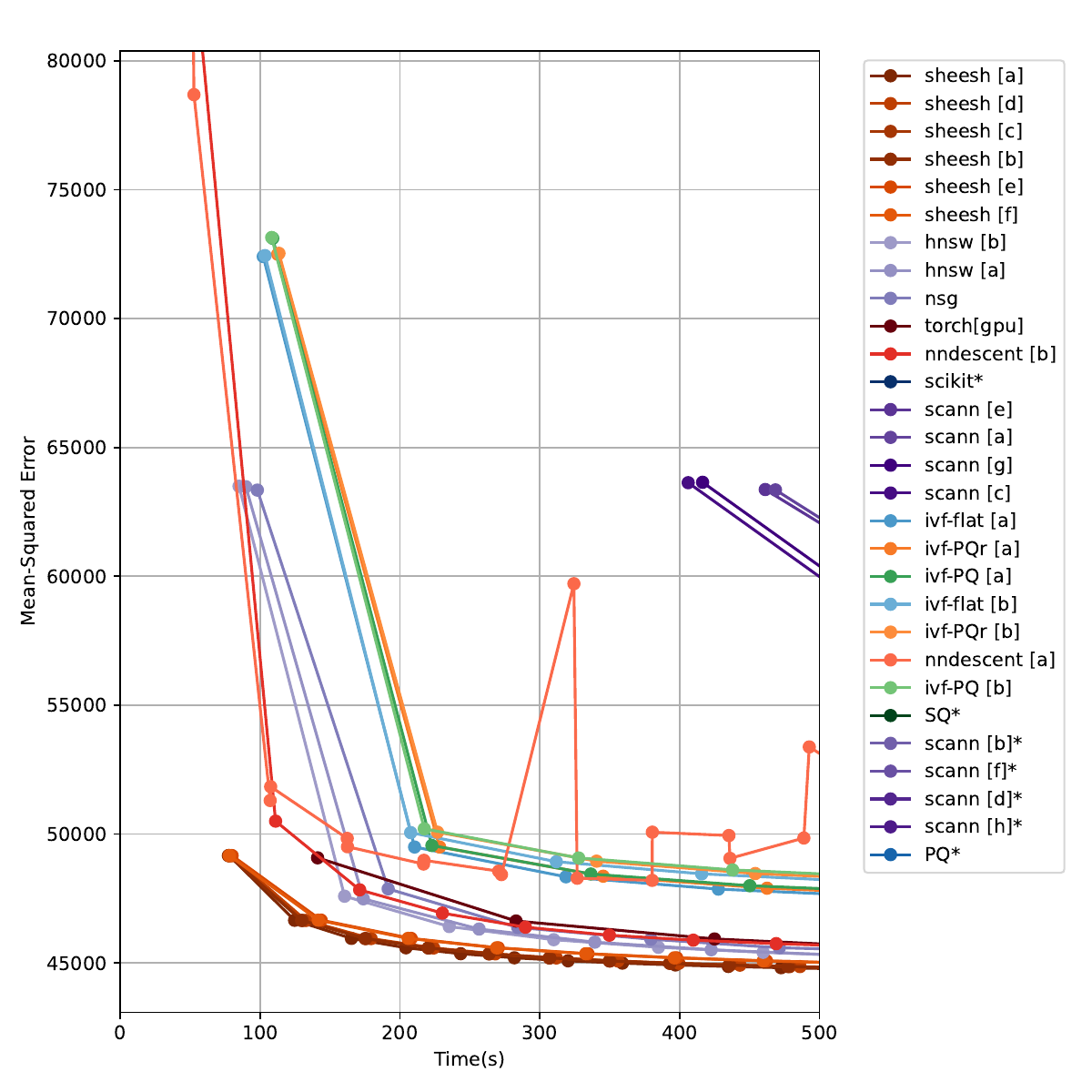}
        \caption{$k=100\,000$}
    \end{subfigure}
    \caption{Comparisons of all methods on the SIFT20M dataset,
    for all tested values of $k$. Initialization is uniformly random.}
    \label{fig:final-results-2}
\end{figure*}

\begin{figure*}
    \centering
    \begin{subfigure}{0.49\textwidth}
        \centering
        \includegraphics[scale=0.40]{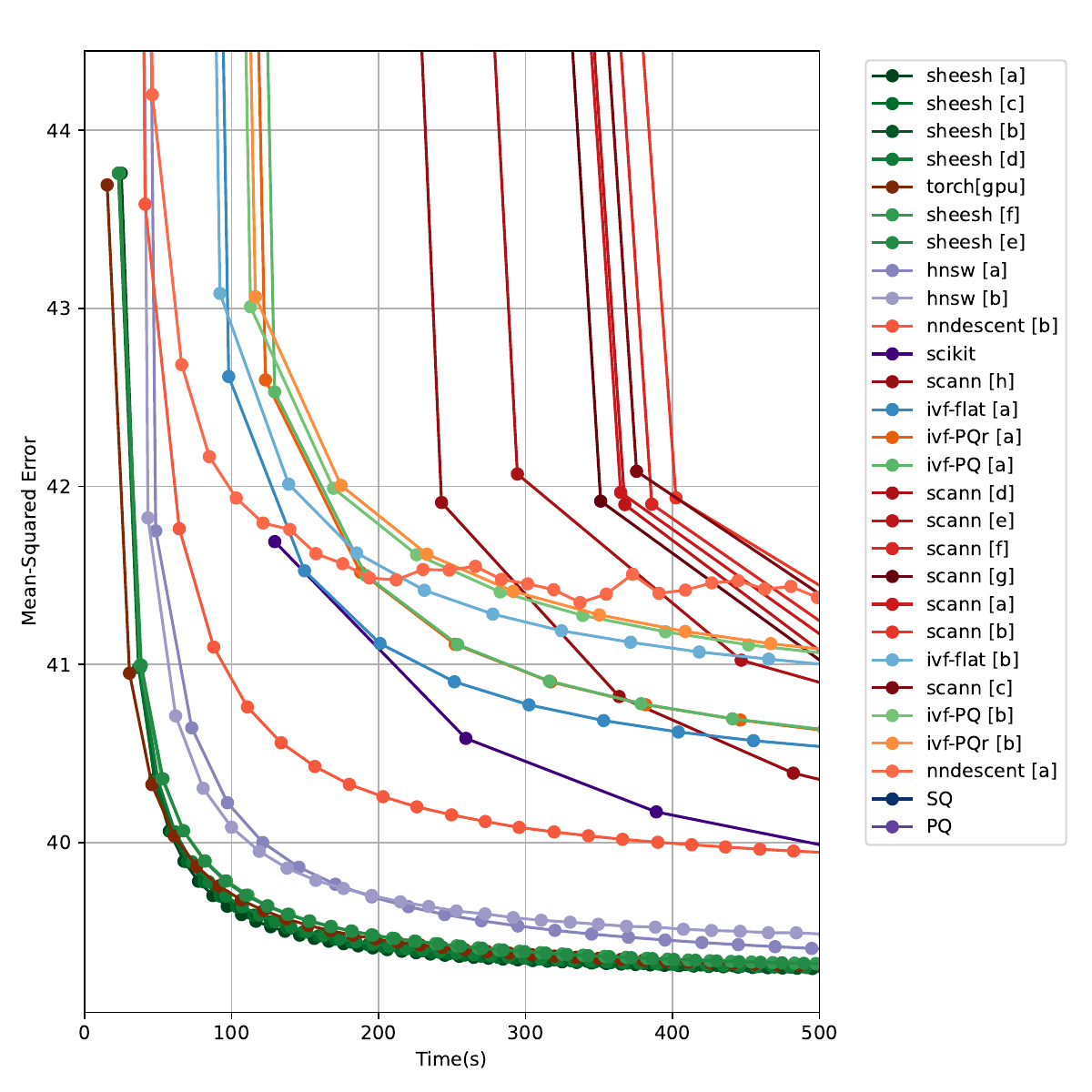}
        \caption{$k=10\,000$}
    \end{subfigure}
    \begin{subfigure}{0.49\textwidth}
        \centering
        \includegraphics[scale=0.40]{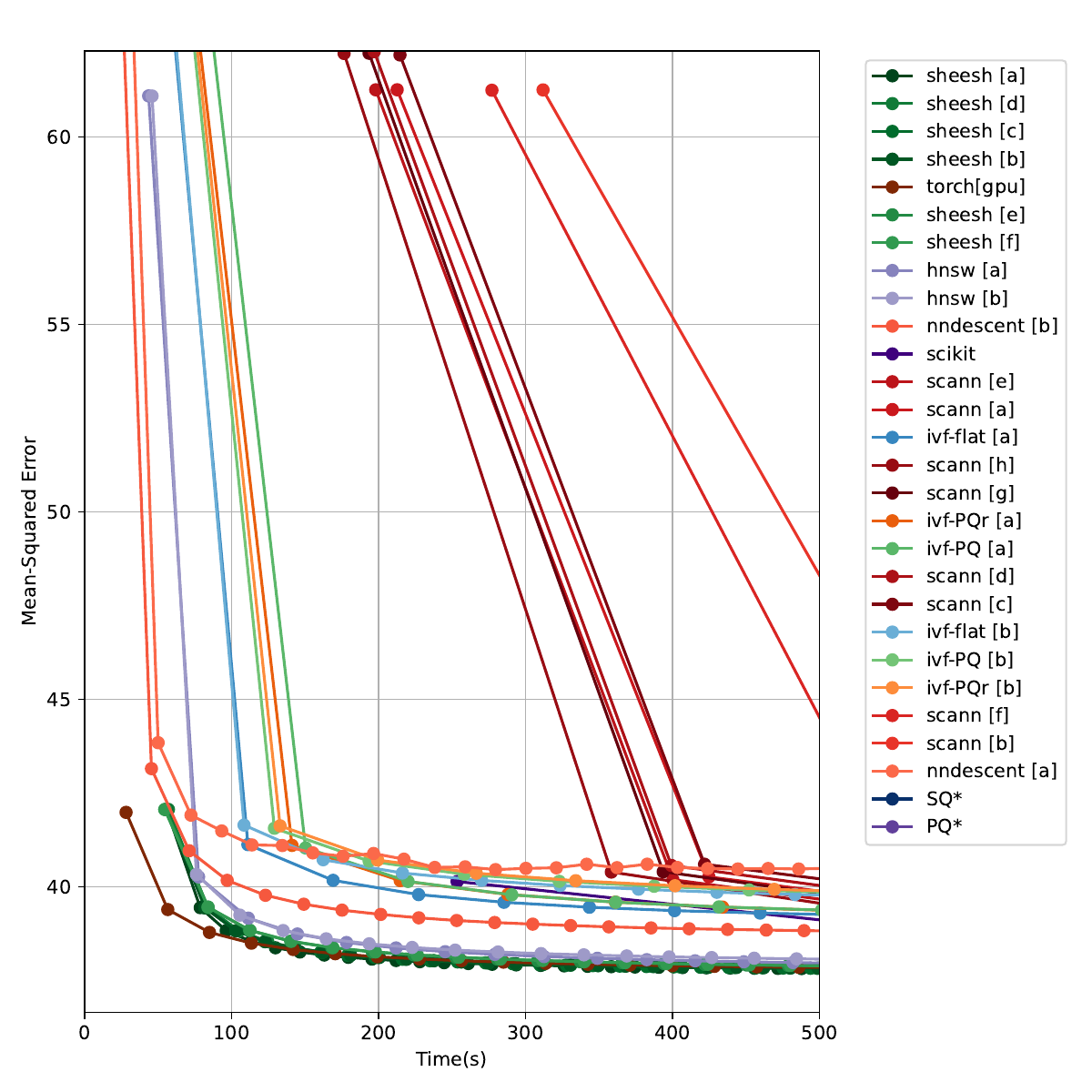}
        \caption{$k=20\,000$}
    \end{subfigure}
    \begin{subfigure}{0.49\textwidth}
        \centering
        \includegraphics[scale=0.40]{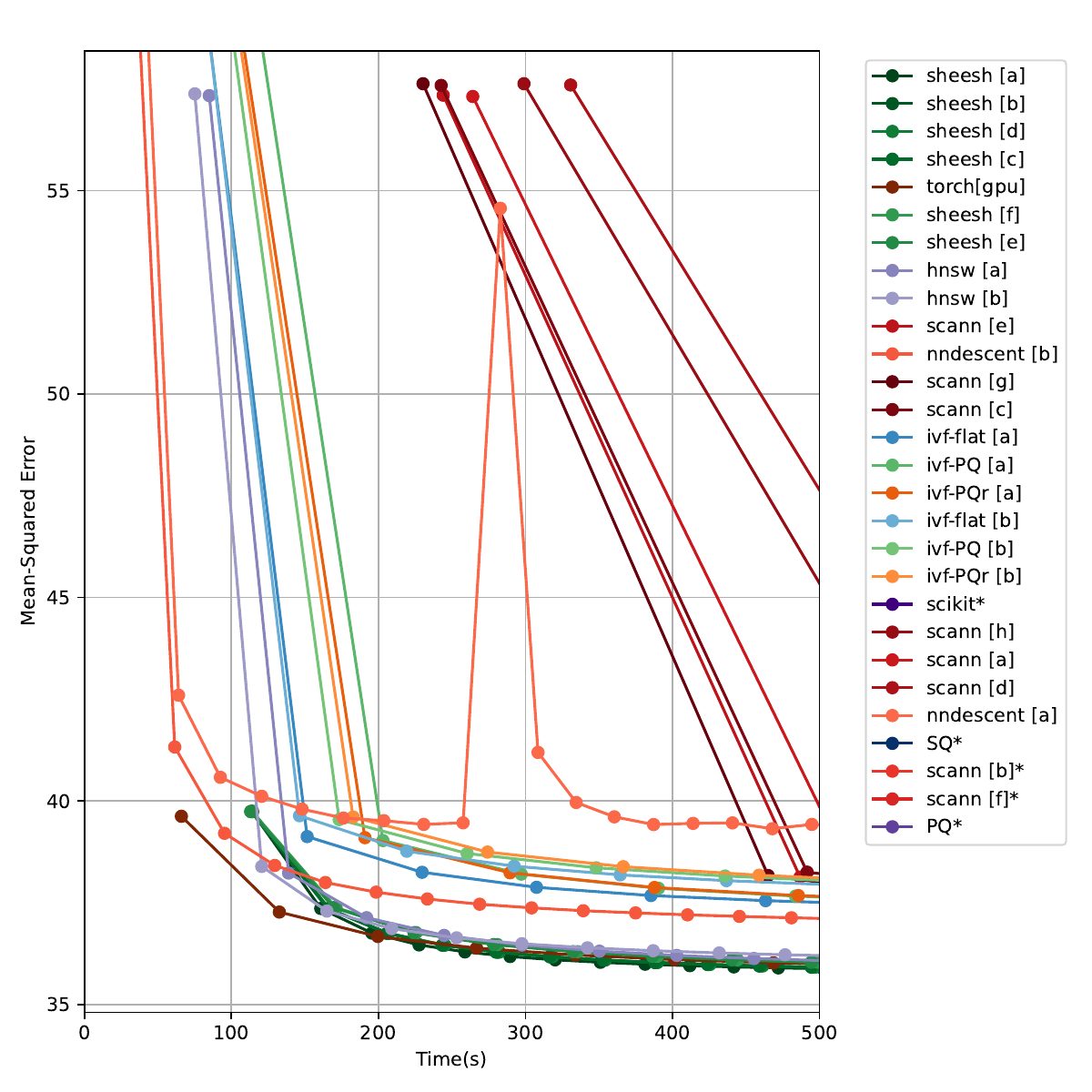}
        \caption{$k=50\,000$}
    \end{subfigure}
    \begin{subfigure}{0.49\textwidth}
        \centering
        \includegraphics[scale=0.40]{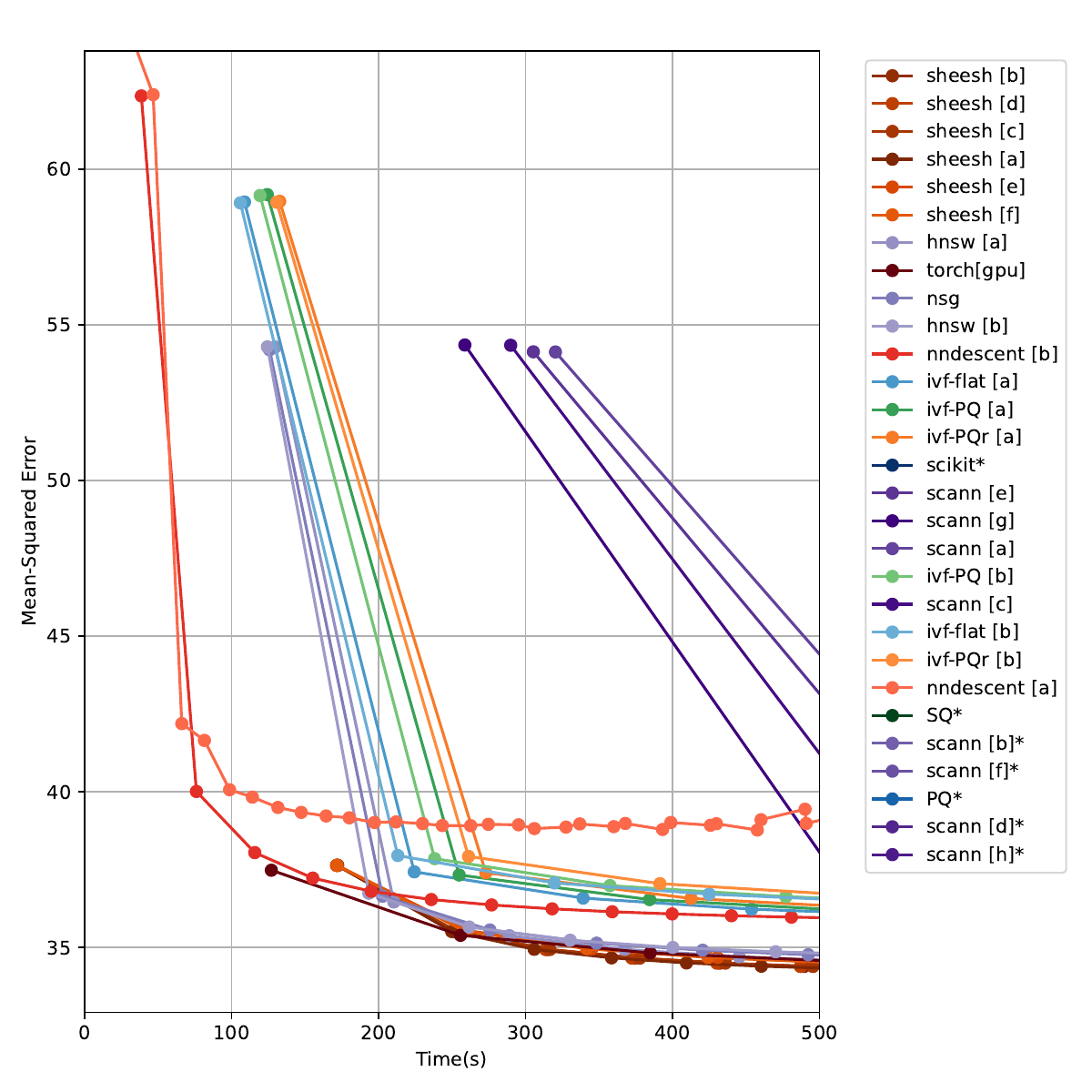}
        \caption{$k=100\,000$}
    \end{subfigure}
    \caption{Comparisons of all methods on the DPR5M dataset,
    for all tested values of $k$. Initialization is uniformly random.}
    \label{fig:final-results-3}
\end{figure*}

\newpage
\section{A Graph-Search Algorithm for SANNS with Provable Guarantees}
\label{sec:provable-sanns}

In this section, we observe that analysis for
an existing search-graph algorithm with provable guarantees for ANNS
(given by \citet{indyk2023worst})
naturally extends to become a seeded search-graph algorithm with provable guarantees for SANNS.
That is, we show that the algorithm
easily adapts to become a form of learning-augmented algorithm,
with robustness and consistency guarantees.

\citet{indyk2023worst} present a modified form
of the ``Vamana'' data structure given by \citet{jayaram2019diskann} for ANNS over a pointset $P\subset\Real^d$
with some provable guarantees.
For clarity, we will refer to the modified data structure as \defn{Vamana with slow preprocessing},
or simply \defn{VamanaSP}.
In particular, their provable guarantees are in terms of several parameters we must introduce:
\begin{itemize}
    \item The \defn{aspect ratio} $\Delta$ is a property of $P$.
    Specifically, it is the ratio $D_{\max}/D_{\min}$
    between the distance of the furthest pair
    $D_{\max}$ and the distance of the closest pair $D_{\min}$
    (where $D$ is the distance function of the metric space, usually Euclidean distance in the rest of our paper).
    \item The \defn{doubling dimension} $d'$ is also a property of $P$.
    For simplicity of presentation, we omit its format definition here,
    but it can be considered a measure of ``intrinsic dimensionality'' of the dataset
    in the same sense as discussed at the end of \cref{sec:ann_background}.
    \item The parameter $\alpha>1$ is a preprocessing-time parameter of both Vamana and VamanaSP.
    \item The parameter $\epsilon>0$ is a query-time parameter for tuning the approximation ratio of the nearest-neighbor returned by VamanaSP.
\end{itemize}

\citet{indyk2023worst} give a preprocessing algorithm for VamanaSP
running in $O(|P|^3)$ time.
This constructs a search-graph,
which they show has maximum degree $O((4\alpha)^{d'}\log\Delta)$.
Using \cref{alg:beam} for queries with $b=1$,
they show that only
$O\left(\log_\alpha\frac{\Delta}{(\alpha-1)\varepsilon}\right)$ node visits
are sufficient to find a $\left(\frac{\alpha+1}{\alpha-1}+\varepsilon\right)$-approximate
nearest-neighbor.
Note that each node visit requires
$O((4\alpha)^{d'}\log\Delta)$ distance computations.
Note that doubling dimension is NP-hard to compute~\cite{GottliebK13},
so it is unclear if it is expected to be a small quantity in a typical dataset.

We claim that,
if seeded with a ``learned''
element of $P$,
running \cref{alg:beam}
on their constructed graph
constitutes a form of learning-augmented algorithm.
In particular, it has the following two high-level properties,
in terms of the tradeoff between iteration count and approximation ratio:
\begin{itemize}
    \item \defn{Robustness}: It maintains worst-case guarantees.
    \item \defn{Consistency}: If the seed point already has a good approximation ratio,
    then the tradeoff between iteration count and approximation ratio \emph{improves}.
\end{itemize}
To prove this, we will simply leverage the techniques
of \citet{indyk2023worst}.
In particular,
they showed that
the aforementioned guarantees
for \cref{alg:beam} on VamanaSP
hold \emph{regardless of the initial starting vertex},
meaning we obtain robustness
for free from their analysis.

We now present a brief proof that (seeded) VamanaSP
also has a form of consistency,
which we will again prove by leveraging
the analysis of \citet{indyk2023worst}:
\begin{theorem}
Let $q$ be a query point
whose nearest-neighbor in $P$
is $a$.
Let $s\in P$ be a point
so that $1+\delta\geq\frac{D(s,q)}{D(a,q)}$,
for some value $\delta>0$.
Then \cref{alg:beam} starting at $s$
returns a
$\left(\frac{\alpha+1}{\alpha-1}+\varepsilon\right)$-approximate
nearest-neighbor
in
$\left\lceil\log_\alpha\frac{1+\delta}{\epsilon}\right\rceil$ node visits.
\end{theorem}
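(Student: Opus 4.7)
The plan is to re-use the iteration-count analysis of \citet{indyk2023worst} for the unseeded beam search on VamanaSP, simply substituting the tighter initial distance $D(s,q)\leq(1+\delta)D(a,q)$ in place of the worst-case bound $D(s,q)\leq D_{\max}$ implicit in their $O(\log_\alpha(\Delta/((\alpha-1)\varepsilon)))$ iteration bound. None of the graph-theoretic properties they establish for VamanaSP depend on the starting point of \cref{alg:beam}: the $\alpha$-slow-preprocessing pruning rule that drives their bound is a local property of the edge set, so the contraction per node visit is the same whether the search starts at an arbitrary vertex or at a learned seed.

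Concretely, I would extract from their proof a per-step contraction lemma: as long as the current vertex $v$ is not already a $\left(\frac{\alpha+1}{\alpha-1}+\varepsilon\right)$-approximate nearest-neighbor of $q$, $v$ has an outgoing edge in the VamanaSP graph whose endpoint $v'$ satisfies $D(v',q)\leq D(v,q)/\alpha$. Initiating \cref{alg:beam} with $b=1$ at the seed $s$ thus produces a sequence of visited vertices $v_0=s,v_1,v_2,\ldots$ along which $D(v_t,q)$ either already certifies the approximation or satisfies $D(v_t,q)\leq D(s,q)/\alpha^t$. Chaining the contraction for $t=\lceil\log_\alpha((1+\delta)/\varepsilon)\rceil$ visits gives
\[
D(v_t,q)\;\leq\;\frac{D(s,q)}{\alpha^t}\;\leq\;\frac{(1+\delta)\,D(a,q)}{\alpha^t}\;\leq\;\varepsilon\cdot D(a,q),
\]
which is strictly inside the $\left(\frac{\alpha+1}{\alpha-1}+\varepsilon\right)$-approximation region, so the beam search's stopping condition triggers no later than this step with a valid output.

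The main obstacle is a small bookkeeping check: the seeded variant of \cref{alg:beam} may initialize its candidate set $C$ with several seed points rather than just $s$, and one must verify that this cannot force a \emph{worse} greedy step than the one guaranteed above. Since with $b=1$ each iteration extracts the candidate nearest to $q$ from $C$, any additional seeds can only replace $s$ by an equally-close-or-closer vertex, which strengthens rather than weakens the chain of contractions. With this observation in hand, the quantitative bound is inherited directly from the analysis of \citet{indyk2023worst}, with $(1+\delta)$ playing the role that $\Delta$ plays in their worst-case setting.
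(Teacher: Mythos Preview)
Your high-level plan---replace the worst-case starting distance in the Indyk--Xu iteration analysis by the tighter bound $D(s,q)\leq(1+\delta)D(a,q)$---is exactly what the paper does. The gap is in the per-step lemma you propose to extract. You claim that whenever the current vertex $v$ is not already a $\left(\tfrac{\alpha+1}{\alpha-1}+\varepsilon\right)$-approximate nearest-neighbor, beam search can move to some $v'$ with $D(v',q)\leq D(v,q)/\alpha$, and chaining this yields $D(v_t,q)\leq\varepsilon\,D(a,q)$. But for $\varepsilon<1$ this would place $v_t$ strictly closer to $q$ than the true nearest-neighbor $a$, a contradiction; so the lemma cannot hold as stated. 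The $\alpha$-pruning property of VamanaSP contracts $D(v,a)$, not $D(v,q)$, and after the triangle inequality an additive term survives. The bound Indyk--Xu actually establish (and the one the paper invokes) is
\[
d_i\;\leq\;\frac{D(s,q)}{\alpha^i}\;+\;\frac{\alpha+1}{\alpha-1}\,D(a,q),
\]
where the second summand does not decay. Substituting $i=\bigl\lceil\log_\alpha\tfrac{1+\delta}{\varepsilon}\bigr\rceil$ into \emph{this} inequality gives $d_i\leq\varepsilon\,D(a,q)+\tfrac{\alpha+1}{\alpha-1}D(a,q)$ directly. So your outline survives once the contraction statement is corrected, but note that the additive term is precisely what produces the $\tfrac{\alpha+1}{\alpha-1}$ portion of the final ratio---your pure-contraction version leaves its origin unexplained. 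Your remark about additional seeds only helping when $b=1$ is fine, though the theorem as stated concerns a single starting point $s$ and does not need it.
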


\begin{proof}
Let $p_i$ be the $i$th point visited
during the execution of \cref{alg:beam}.
Let $d_i$ be the distance $D(p_i,q)$.
As part of their proof, \citet{indyk2023worst}
show that $d_i\leq\frac{D(s,q)}{\alpha^i}+\frac{\alpha+1}{\alpha-1}D(a,q)$.
In particular, it also follows from their argument
that
the algorithm will only terminate without intervention if
it has found an
$\left(\frac{\alpha+1}{\alpha-1}\right)$-approximate
nearest-neighbor.
We leverage this in a straightforward manner:
If $i\geq\log_\alpha\frac{1+\delta}{\epsilon}$,
then $\varepsilon\geq\frac{1+\delta}{\alpha^i}\geq
\frac{D(s,q)}{D(a,q)\alpha^i}$.
Thus:
$$d_i\leq\frac{D(s,q)}{\alpha^i}+\frac{\alpha+1}{\alpha-1}D(a,q)
\leq\varepsilon D(a,q)+\frac{\alpha+1}{\alpha-1}D(a,q)=
\left(\frac{\alpha+1}{\alpha-1}+\varepsilon\right)D(a,q).$$
\end{proof}

Although \citet{indyk2023worst}
note that the aspect ratio of typical datasets
tends to be quite small,
this is not always necessarily the case.
Moreover, they note that
their algorithm essentially cannot have the worst-case
$\log\Delta$ factor in the number of node visits
replaced with a $\log|P|$ factor
(they give a more formal argument for this than we provide here).
Their proof of this relies on the fact that they
have no guarantees about the starting vertex.
Hence, it is reasonable to assume
that we may sometimes be able to leverage our consistency guarantee
to offer a slightly different algorithm with
an analogous improved ratio.
In fact, we can sometimes do even better than this,
by applying
methods that can obtain
$O(\log n)$-approximate nearest-neighbors
in well-known metric spaces.
In particular:
\begin{corollary}
Assume an oblivious adversary.
For data embedded in $\Real^d$,
under the Euclidean, inner-product, or cosine distances,
there exists an algorithm that, with $O(nd^2)$ preprocessing time,
can, in $O(d^2+\log n)$ time per query,
produce a seed point for VamanaSP
requiring
expected at most
$log_\alpha\frac{\min(O(\log n)}{\varepsilon}$ node visits
to produce a $\left(\frac{\alpha+1}{\alpha-1}+\varepsilon\right)$-approximate
nearest-neighbor.
\end{corollary}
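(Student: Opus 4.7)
The plan is to chain together two ingredients: (i) the consistency theorem just proved, which tells us that any seed point $s$ with approximation ratio $1+\delta$ reduces the VamanaSP query cost to $\lceil \log_\alpha \frac{1+\delta}{\varepsilon} \rceil$ node visits, and (ii) an off-the-shelf $O(\log n)$-approximate nearest-neighbor structure with the stated preprocessing/query budgets, whose output we feed in as the seed. If such a preliminary data structure is available, then plugging $1+\delta = O(\log n)$ into the bound of the theorem immediately gives the claimed expected number of node visits, and the query time merely adds $O(d^2 + \log n)$ on top of the $O(d^2 \cdot \text{node visits})$ already incurred by VamanaSP.

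First I would invoke a known $O(\log n)$-approximate NN data structure for Euclidean distance (from which the inner-product and cosine cases follow, since they are equivalent up to the normalizations noted in \cref{sec:ann_background}). The intended candidate is a locality-sensitive-hashing scheme in the style of Indyk--Motwani, possibly preceded by a Johnson--Lindenstrauss projection to bring the working dimension down and absorb the $d^2$ factors into preprocessing. The key properties I need are: $O(nd^2)$ construction time, $O(d^2 + \log n)$ query time, and an expected $O(\log n)$ approximation ratio when the query is oblivious of the internal randomness. The oblivious-adversary hypothesis is used exactly here: without it, a query crafted against the random hash choices could make the seed arbitrarily bad.

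Next I would combine the two. Given a query $q$, run the preliminary structure to obtain a seed $s\in P$ with $D(s,q)/D(a,q) \leq 1+\delta$ where $\mathbb{E}[1+\delta] = O(\log n)$. Then invoke \cref{alg:beam} on the VamanaSP graph starting from $s$, which by our theorem terminates in at most $\lceil \log_\alpha \frac{1+\delta}{\varepsilon} \rceil$ node visits and returns a $\left(\frac{\alpha+1}{\alpha-1}+\varepsilon\right)$-approximate nearest-neighbor. Taking expectations, using monotonicity of $\log_\alpha$ and Jensen's inequality (or simply bounding the ceiling by one plus its argument), delivers the stated expected bound of $\log_\alpha \frac{O(\log n)}{\varepsilon}$ node visits. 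The total query time is $O(d^2 + \log n)$ for the seed plus $O(d^2 \cdot \log_\alpha \frac{\log n}{\varepsilon})$ for the VamanaSP traversal, which absorbs into the $O(d^2 + \log n)$ form up to constants hidden in the approximation.

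The main obstacle will be pinning down a single off-the-shelf preliminary ANNS data structure whose guarantees exactly match the triple $(O(nd^2), O(d^2+\log n), O(\log n))$ for preprocessing, query, and expected approximation respectively; different classical constructions trade these off in slightly different ways, and some care is needed to verify that the approximation-ratio guarantee holds in expectation (rather than with merely constant probability) so that the expectation over the composed algorithm goes through cleanly. Once that black box is fixed, the rest of the argument is an almost mechanical composition of the previous theorem with the approximate-NN guarantee under the oblivious-adversary assumption.
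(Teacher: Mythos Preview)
Your overall architecture---use the consistency theorem just proved, feed in a seed from a coarse approximate-NN routine, and take expectations---is exactly what the paper does. The difference is in the choice of coarse routine. You reach for locality-sensitive hashing in the style of Indyk--Motwani (possibly preceded by a Johnson--Lindenstrauss reduction), and you correctly flag that matching the exact triple of preprocessing time, query time, and expected approximation ratio to the stated bounds is the delicate part. The paper sidesteps that difficulty entirely: its seed routine is simply a \emph{random projection of $P$ into $\Real^1$}, followed by sorting (preprocessing) and binary search (per query). The claimed $O(\log n)$ expected distortion is attributed to Johnson--Lindenstrauss, and the equivalence of Euclidean, inner-product, and cosine distances is handled by the one-extra-dimension isometry of Bachrach et al. This is considerably lighter than an LSH construction, and it is also the same random-projection trick the paper uses for ordering within bulk-query groups in \cref{subsec:seeded-search-graphs}, so the corollary is partly meant to echo that heuristic.

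Your route would also work in principle, but the paper's construction is more elementary, avoids the parameter-matching issue you identified, and makes the stated preprocessing and query costs essentially transparent (project, sort, binary search). Once you swap in the $1$-dimensional projection for LSH, the rest of your argument---including the Jensen step for pushing the expectation through the logarithm---goes through unchanged.
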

In particular, such a seed point can be easily achieved
with a random projection of $P$,
and the proof follows.
Note that Euclidean/inner-product/cosine distance are all isometric
up to the inclusion of one extra dimension~\cite{bachrach2014speeding}.
Note also that
many other forms of (fixed approximation-ratio)
ANNS could be applied to further
accelerate this method.
However, we have highlighted the ability to use
random projection in particular
since it is somewhat related
to our use of random projection
in the main body of our work for BSANNS
(see \cref{subsec:seeded-search-graphs}).
For instance,
we believe it would be interesting to
determine if a ``bulk'' method
using \emph{only} random projection
(no grouping step)
could obtain interesting amortized guarantees.
That is, an approach where
a bulk query of $O(|P|)$ points
could be randomly projected
into $\Real$ for sorting purposes,
and then the results of each query
could be provided as seed points for the next.

Overall, in this section, we have given a nice extension of VamanaSP to SANNS.
Unfortunately, the preprocessing step of VamanaSP is too slow for our overall goal of $k$-means clustering with large $k$,
but this result is still of interest from the theoretical viewpoint of the SANNS problem.
We believe a promising avenue for
future theoretical work on SANNS would be to work with \emph{greedy trees}~\cite{ChubetPSS23}.
For fixed doubling dimension,
\citet{ChubetPSS23} show that greedy trees have better approximation guarantees than VamanaSP,
and
the work of \citet{Har-PeledM06} can be leveraged to show that
they can be computed with near-linear preprocessing time.

\newpage
\section{Implementation Details}
\label{sec:impl-deets}
In this section, we discuss some details of our specific implementation
that aided us in obtaining our final performance.

\subsection{HNSW Implementation}
Our base HNSW implementation has been carefully tuned.
The majority of its runtime when analyzed in a profiler (Linux perf~\cite{linux_perf} and KDBA Hotspot~\cite{KDAB_perf}) is taken up by distance computations.
In particular, most of these distance computations take place within
\cref{alg:beam}.
The distance computations are accelerated with careful manual \defn{prefetching} (preemptive insertions into cache) and SIMD-acceleration.
We implemented SIMD-accelerated distance queries with the Eigen {\tt C++} library ~\cite{eigen-10} for simplicity.
Manual prefetching is important since search-graph approaches to ANNS inherently have almost zero data-locality --- they involve comparing high-dimensional vectors according to an unpredictable graph traversal.
Consequently, compilers and hardware prefetchers are not well-equipped to predict
cache lines to prefetch.

In particular, the reference implementation for HNSW~\cite{malkov2018efficient} (`\texttt{hnswlib}'')
did not implement prefetching in the optimal manner.
We summarize their prefetching scheme for beam search
in \cref{alg:beam-hnswlib-prefetch} (slightly simplified for presentation).
We identified three areas for improvement in their implementation:
\begin{itemize}
    \item It only ever prefetches exactly one vector ahead (it is unparameterized).
    \item For each vector, it only fetches the first cache line containing the vector.
    For sufficiently high-dimensional data, it does not manually prefetch entire next vector (only first cache line). This may be mitigated by smart compilers or hardware prefetchers, but (as far as we know) such mitigations are not guaranteed.
    \item When the current vector has already been visited, it does not prefetch any part of the next vector, even if that vector is unvisited.
\end{itemize}
\begin{algorithm}[ht]
   \caption{beam search with hnswlib's prefetching scheme}
   \label{alg:beam-hnswlib-prefetch}
\begin{algorithmic}
   \STATE {\bfseries Input:} $P\subset\Real^d$, search-graph $G=(P,E)$, $p^*\in P$, $q\in\Real^d$, $b\in\mathbb{Z}_{\geq1}$
   \STATE Initialize sets $C,N=\{p^*\}$ (candidates, nearest).
   \STATE Mark $p^*$ as visited.
   \REPEAT
   \STATE Extract the element $c$ from $C$ nearest to $q$.
   \IF{$|N|=b$ and $d(c,q)>d(n,q)$ for all $n\in N$}
     \STATE \textbf{break}
   \ENDIF
   \STATE \textbf{Prefetch:} The first cache line of the data for the first neighbor of $c$.
   \FOR{each (outgoing) neighbor $v$ of $c$ in $G$}
     \IF{$v$ is not marked as visited}
     \STATE Mark $v$ as visited
     \STATE \textbf{Prefetch:} The first cache line of the data for the next neighbor of $c$ after $v$.
     \IF{$|N|<b$ or $d(v,q)<d(n,q)$ for some $n\in N$}
       \STATE Add $v$ to $C$ and $N$
       \STATE If $|N|>b$ or $|C|>b$, remove the furthest element.
     \ENDIF
     \ENDIF
     \STATE Mark $v$ as visited.
   \ENDFOR
   \UNTIL{$C$ is empty}
   \STATE {\bfseries Output:} $N$, the $b$ points in $P$ closest to $q$.
\end{algorithmic}
\end{algorithm}

Our own implementation is not based on \texttt{hnswlib}, and we use a more careful (and simpler) prefetching scheme, outlined in \cref{alg:beam-mop-prefetch} (again, slightly simplified for presentation).
The main ideas are as follows:
\begin{itemize}
    \item We make a list containing the indices of all unvisited neighbor \emph{before} iterating through the list.
    \item We parameterize the distance ahead that vectors are prefetched.
\end{itemize}
\begin{algorithm}[ht]
   \caption{beam search with our prefetching scheme}
   \label{alg:beam-mop-prefetch}
\begin{algorithmic}
   \STATE {\bfseries Input:} $P\subset\Real^d$, search-graph $G=(P,E)$, $p^*\in P$, $q\in\Real^d$, $b\in\mathbb{Z}_{\geq1}$
   \STATE Initialize sets $C,N=\{p^*\}$ (candidates, nearest), and an empty list $L$ (neighbor list).
   \STATE Mark $p^*$ as visited.
   \REPEAT
   \STATE Extract the element $c$ from $C$ nearest to $q$.
   \IF{$|N|=b$ and $d(c,q)>d(n,q)$ for all $n\in N$ and a sufficient number of iterations have occurred}
     \STATE \textbf{break}
   \ENDIF
   \FOR{each (outgoing) neighbor $v$ of $c$ in $G$}
       \IF{$v$ is not marked as visited}
           \STATE Add $v$ to $L$ and mark $v$ as visited.
       \ENDIF
   \ENDFOR
   \STATE \textbf{Prefetch:} All cache lines for the first $4$ elements in $L$.
   \FOR{each $v$ in $L$}
        \STATE \textbf{Prefetch:} All cache lines for the next element of $L$ that not yet prefetched.
        \STATE Compute $d(v,q)$.
        \IF{$|N|<b$ or $d(v,q)<d(n,q)$ for some $n\in N$}
            \STATE Add $v$ to $C$ and $N$.
            \STATE If $|N|>b$, remove the furthest element.
        \ENDIF
   \ENDFOR
   \STATE Clear $L$.
   \UNTIL{$C$ is empty}
   \STATE {\bfseries Output:} $N$, the $b$ points in $P$ closest to $q$.
\end{algorithmic}
\end{algorithm}

One of the NeurIPS 2023 Big-ANN competition~\cite{bigann2023} winners, PyANNS~\cite{pyanns},
also took a similar approach to prefetching.
In particular, they also automatically tuned the added parameter.
In contrast, we simply use a sane default of prefetching $4$ vectors ahead; a number of values seemed to exhibit essentially the same performance.

We have not carefully examined the prefetching schemes
of other ANNS search-graph implementations.

\subsection{Data Streaming}
Since we are dealing with datasets too large to fit in RAM,
we require some form of multi-threaded data streaming system.
We adopted a simple and straightforward approach
leveraging the C++ template system to create an abstract container we simply call a ``bucket''
implementing some kind of data streaming routine using callbacks.
We created several implementations of buckets,
including one wrapping NumPy containers \cite{harris2020numpy},
which is what we used for all of our benchmarking.
One could create similar bucket implementations
for a database or similarly bulky tool,
although we have opted to implement only
simple variations,
since
our methods are all quite compute-limited,
and the IO patterns are extremely simple
(we simply stream through the dataset in order during each iteration).

\end{document}